%% file: iclr2026_conference.tex
\documentclass{article} 
\usepackage{amsmath,amssymb,amsfonts}
\usepackage{booktabs}   
\usepackage{tabularx}   
\usepackage{multirow}
\usepackage{array}      
\usepackage{iclr2026_conference,times}
\usepackage{algorithm}       
\usepackage{amsmath}
\usepackage{amsthm}
\usepackage{enumitem}
\usepackage{graphicx}
\usepackage{caption}
\usepackage{hyperref} 
\usepackage{tikz}
\usepackage{listings}
\usepackage{xcolor}
\usepackage{subcaption}
\usepackage{microtype}

\lstdefinelanguage{YAML}{
  keywords={true,false,null,y,n},
  keywordstyle=\color{blue},
  basicstyle=\ttfamily\small,
  sensitive=false,
  comment=[l]{\#},
  commentstyle=\color{teal},
  morestring=[b]',
  morestring=[b]",
}
\usetikzlibrary{positioning,arrows.meta}
\usepackage{microtype}
\usepackage{algpseudocode}   

\input{math_commands.tex}

\usepackage{hyperref}
\usepackage[nameinlink,capitalize]{cleveref}
\usepackage{placeins} 
\usepackage{float}   
\usepackage{natbib}
\usepackage{amsmath,amssymb}

\numberwithin{equation}{section}

\crefname{equation}{Eq.}{Eqs.}
\Crefname{equation}{Eq.}{Eqs.}
\crefname{section}{App.}{Apps.}
\Crefname{section}{Appendix}{Appendices}
\usepackage{url}

\title{PRISM: Festina Lente Proactivity---Risk-Sensitive, Uncertainty-Aware Deliberation for Proactive Agents}

\author{
  \textbf{Yuxuan Fu}$^{1}$,
  \textbf{Xiaoyu Tan}$^{2}$\thanks{Equal contribution.}~,
  \textbf{Teqi Hao}$^{1}$\footnotemark[1]~,
  \textbf{Chen Zhan}$^{1}$,
  \textbf{Xihe Qiu}$^{1}$\thanks{Corresponding author.}\\
  $^{1}$Shanghai University of Engineering Science, $^{2}$National University of Singapore \\
  \texttt{yuxuanfu@sues.edu.cn}, \texttt{txywilliam1993@outlook.com},
  \texttt{teqihao@gmail.com}\\
  \texttt{chenzhan361@gmail.com}, \texttt{xiheqiu1993@gmail.com} \\
}

\newtheorem{proposition}{Proposition}

\iclrfinalcopy 
\begin{document}

\maketitle
\begin{abstract}
Proactive agents must decide not only \emph{what} to say but also \emph{whether and when} to intervene. Many current systems rely on brittle heuristics or indiscriminate long reasoning, which offers little control over the benefit-burden tradeoff. We formulate the problem as cost-sensitive selective intervention and present \textsc{PRISM}, a novel framework that couples a decision-theoretic gate with a dual-process reasoning architecture. At inference time, the agent intervenes only when a calibrated probability of user acceptance exceeds a threshold derived from asymmetric costs of missed help and false alarms. Inspired by \emph{festina lente} (Latin: ``make haste slowly''), we gate by an acceptance-calibrated, cost-derived threshold and invoke a resource-intensive \emph{Slow} mode with counterfactual checks only near the decision boundary, concentrating computation on ambiguous and high-stakes cases. Training uses \emph{gate-aligned, schema-locked distillation}: a teacher running the full \textsc{PRISM} pipeline provides dense, executable supervision on unlabeled interaction traces, while the student learns a response policy that is \emph{explicitly decoupled} from the intervention gate to enable tunable and auditable control. On \textsc{ProactiveBench}, \textsc{PRISM} reduces false alarms by \textbf{22.78\%} and improves F1 by \textbf{20.14\%} over strong baselines. These results show that principled decision-theoretic gating, paired with selective slow reasoning and aligned distillation, yields proactive agents that are precise, computationally efficient, and controllable. To facilitate reproducibility, we release our code, models, and resources at \url{https://prism-festinalente.github.io/}; all experiments use the open-source \textsc{ProactiveBench} benchmark.
\end{abstract}

\section{Introduction}

Proactive agents aim to surface help at the right moment, before users ask, yet without becoming intrusive \citep{bubeck2023sparks, schick2023toolformer, yao2023react, achiam2023gpt, lu2024proactive}. The central challenge is a speak or remain silent decision under asymmetric costs. False alarms erode trust and add cognitive load, while misses forgo timely assistance. In real deployments this decision must be made with partial information, nontrivial tool latency, and a strict compute budget. Consider a coding copilot. When a flaky continuous integration failure emerges during a sensitive configuration edit, the agent should intervene only if help is needed and likely to be accepted; otherwise it should stay out of the way.

Prevailing pipelines often rely on ad hoc thresholds or chain of thought used by default. This leads to two persistent failures: \textbf{over intervention} that interrupts user flow, and \textbf{indiscriminate compute} that expends slow and expensive reasoning even for obvious cases \citep{cobbe2021training, schick2023toolformer, wang2024chain, zhang2024ask}. In addition, acceptance optimization is frequently decoupled from timing control. Prompts or output formats are tuned offline, and only later are heuristics added for when to speak. This blurs the line between the learned policy and product controls and weakens guarantees on the tradeoff between quality and efficiency.

We introduce \textbf{PRISM} (\emph{Proactive Risk Sensitive Intervention with a Slow mode Margin}), a simple and principled framework that unifies \emph{need and acceptance modeling}, \emph{cost sensitive gating}, and \emph{selective slow reasoning}. This design is inspired by \emph{festina lente} (Latin: ``make haste slowly''): the agent gates by expected utility and invokes slow mode only near the decision boundary. We frame timing as a selective decision over two calibrated probabilities: the probability that help is needed ($p_{\text{need}}$) and the probability that help will be accepted ($p_{\text{accept}}$). A cost aware gate compares acceptance confidence with an adaptive threshold that depends on $p_{\text{need}}$ and on the relative costs of false alarms and misses. Slow reasoning is not universal. PRISM allocates a single pass slow stage only near this decision boundary through a slow mode margin, which concentrates computation where it is most likely to change the outcome. The result is an interpretable control surface with a small set of knobs, such as cost terms and the width of the near boundary region, that moves the benefit and burden frontier in predictable ways. This connects to the literature on selective prediction and calibration \citep{el2010foundations, geifman2017selective, guo2017calibration}. We further characterize how the adaptive threshold varies monotonically with costs and with $p_{\text{need}}$, and we relate the slow trigger rate to expected latency, which yields a compact map from knobs to metrics.

Training mirrors deployment. PRISM applies the same costs, the same gate, and the same slow mode margin to shape both runtime control and the learning signal, which reduces the gap between simulation and production. Concretely, we mix three terms in a deployment aligned objective: (i) outcome acceptance, (ii) need consistency that rewards silence when no help is needed and penalizes silence when help is needed, and (iii) burden terms that price user disruption and tokens used by slow reasoning. Our evaluation follows paired event protocols that stress timing under partial information and realistic tool use. We report results on synthetic teacher traces with diagnostics and on real world coding logs \citep{jimenez2024swe, zhang2024ask}. Code and protocols will be released upon acceptance through an anonymous repository.

\begin{figure}[t]
\centering
\includegraphics[width=\linewidth]{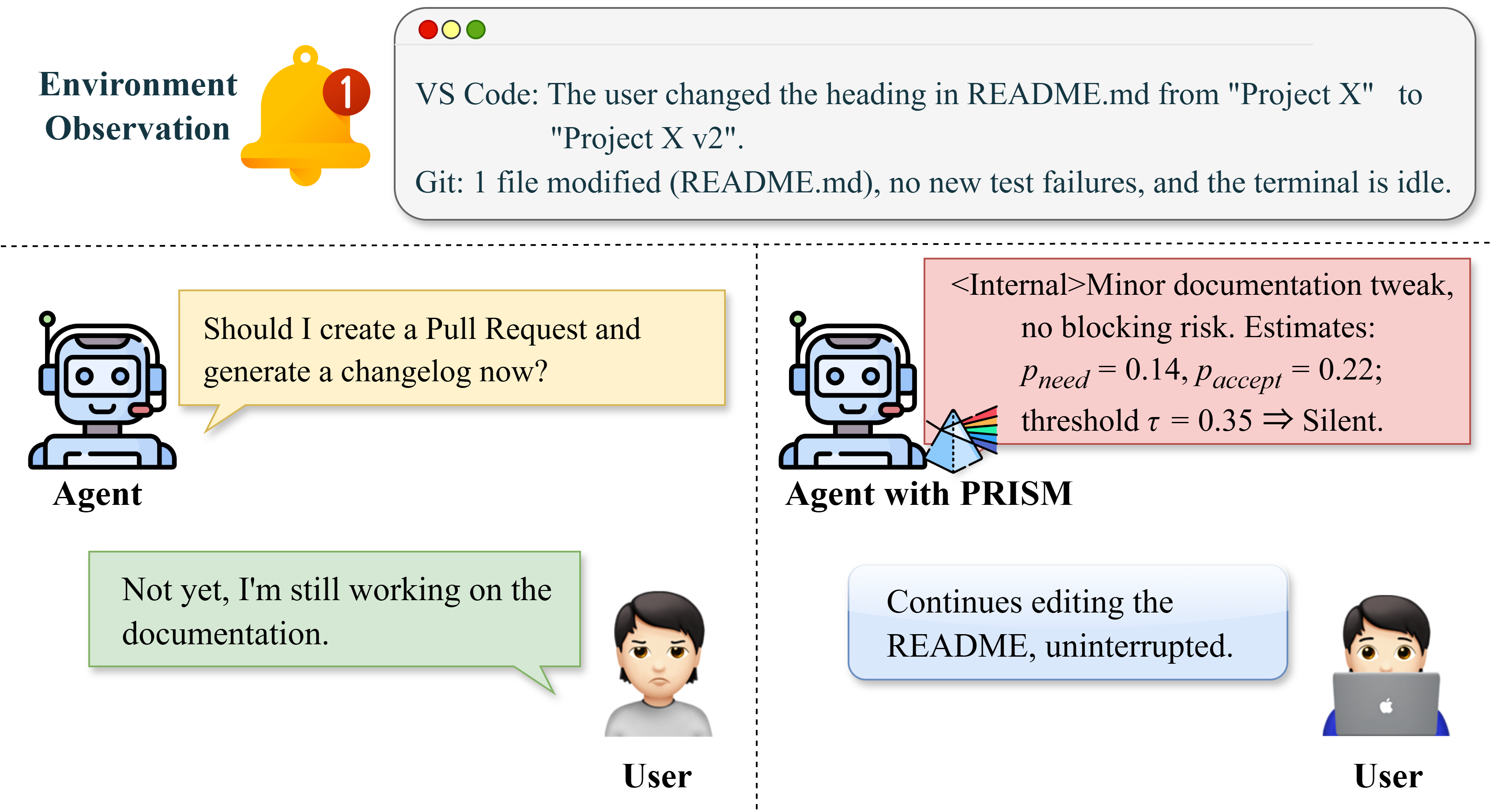}
\vspace{-0.5em}
\caption{\textbf{PRISM reduces false alarms by estimating confidence before speaking.}
The agent first estimates calibrated $(\hat p_{\mathrm{need}}, \hat p_{\mathrm{accept}})$, compares $\hat p_{\mathrm{accept}}$ with an
adaptive threshold $\tau\left(\hat p_{\mathrm{need}}; C_{\mathrm{FA}}, C_{\mathrm{FN}}\right)$, and triggers a single pass slow stage only
within a narrow region near the threshold. In benign edit events, low $\hat p_{\mathrm{need}}$ raises $\tau$, the gate remains \emph{silent},
and the user continues without interruption.}
\label{fig:prism-false-alarm}
\vspace{-0.5em}
\end{figure}

\paragraph{Contributions.} PRISM connects a simple principle with a minimal implementation and a reproducible evaluation protocol:
\begin{itemize}
\item \textbf{Cost sensitive gating in PRISM.} The method separates need from acceptance and concentrates single pass slow reasoning near an adaptive boundary, which yields a compact and interpretable control surface for quality and efficiency tradeoffs.
\item \textbf{Gate aligned objective.} The same costs and margins are used at train time and at test time, which improves calibration and closes the gap between simulation and production.
\item \textbf{Budget aware evaluation.} We report precision, recall, F1, false alarm rate, slow token cost, and area based summaries of the benefit and burden tradeoff (AUDBC). Across always slow, no slow, fixed threshold, and schema only distillation baselines, PRISM reduces false alarms at matched recall while lowering slow compute, and it maintains control stability under distribution shift. These properties are essential for reliable deployment.
\end{itemize}

\section{Related Work}

\textbf{Proactive agents and event-stream timing.}
Recent work shifts from purely reactive assistance to deciding \emph{when} to intervene within evolving event streams. \emph{Proactive Agent}/ProactiveBench formalize acceptance-supervised intervention under realistic workflows with partial information, emphasizing timing and user burden \citep{lu2024proactive}. Similar challenges in optimizing intervention timing have been observed in critical decision-support domains \citep{hao2025mvp}, highlighting the universality of this temporal decision problem. We build on this direction and emphasize \emph{deployment alignment}: the gate, costs, and visible schema are shared between training and inference. This design isolates improvements that arise from better timing rather than from interface or prompt mismatch \citep{lu2024proactive}.

\textbf{Risk-sensitive intervention, abstention, and calibration.}
The choice to speak or remain silent is a binary decision with asymmetric costs for false alarms and missed help. Cost-sensitive learning and reject-option theory provide principled thresholds and characterize the risk and coverage trade-off \citep{elkan2001foundations, chow2003optimum}. Selective deep classifiers instantiate these ideas with integrated abstention heads \citep{geifman2019selectivenet}. Because our gate relies on calibrated probabilities, it connects to modern calibration methods for neural networks \citep{guo2017calibration} and to metareasoning views that allocate computation where the expected value of computation is high \citep{russell1991right}. Our controller concentrates additional \textsc{slow} computation within a near-threshold band, operationalizing these principles for proactive timing.

\textbf{Uncertainty-aware objectives beyond binary rewards.}
Standard RLHF typically optimizes a single scalar reward. In contrast, we estimate intervention probabilities $(p_{\text{need}}, p_{\text{accept}})$ and compose them with labels $(y_{\text{need}}, y_{\text{accept}})$ into a structured objective over event streams, which is consistent with reinforcement learning principles \citep{sutton2018reinforcement}. This view aligns with efforts to move beyond binary rewards by training language models to reason about their own uncertainty \citep{damani2025beyond}. It also dovetails with selective prediction, since these probabilities parameterize a cost-sensitive gate rather than optimizing a monolithic reward \citep{geifman2019selectivenet}.Such alignment strategies relate to recent works on using LLM-derived feedback to balance helpfulness and harmlessness \citep{tan2023self}, and frameworks leveraging LLMs to guide reward signals in decision-making tasks \citep{deng2025reward}.

\textbf{Protocol-aligned synthesis, distillation, and selective \textsc{slow}.}
To scale supervision while remaining faithful to the evaluation protocol, we synthesize event-conditioned decisions with a teacher language model and distill them into a smaller model \citep{hinton2015distilling}, following self-generated instruction pipelines \citep{wang2023self}. When uncertainty is high, we invoke a \textsc{slow} scratchpad-style pass motivated by chain-of-thought prompting \citep{wei2022chain, tan2024thought} while keeping the visible schema identical across training and evaluation. This alignment ensures that reliability gains stem from improved timing and calibration rather than from format shifts.

\section{Method}
\label{sec:method}

\begin{figure}[t]
  \centering
  \includegraphics[width=\linewidth]{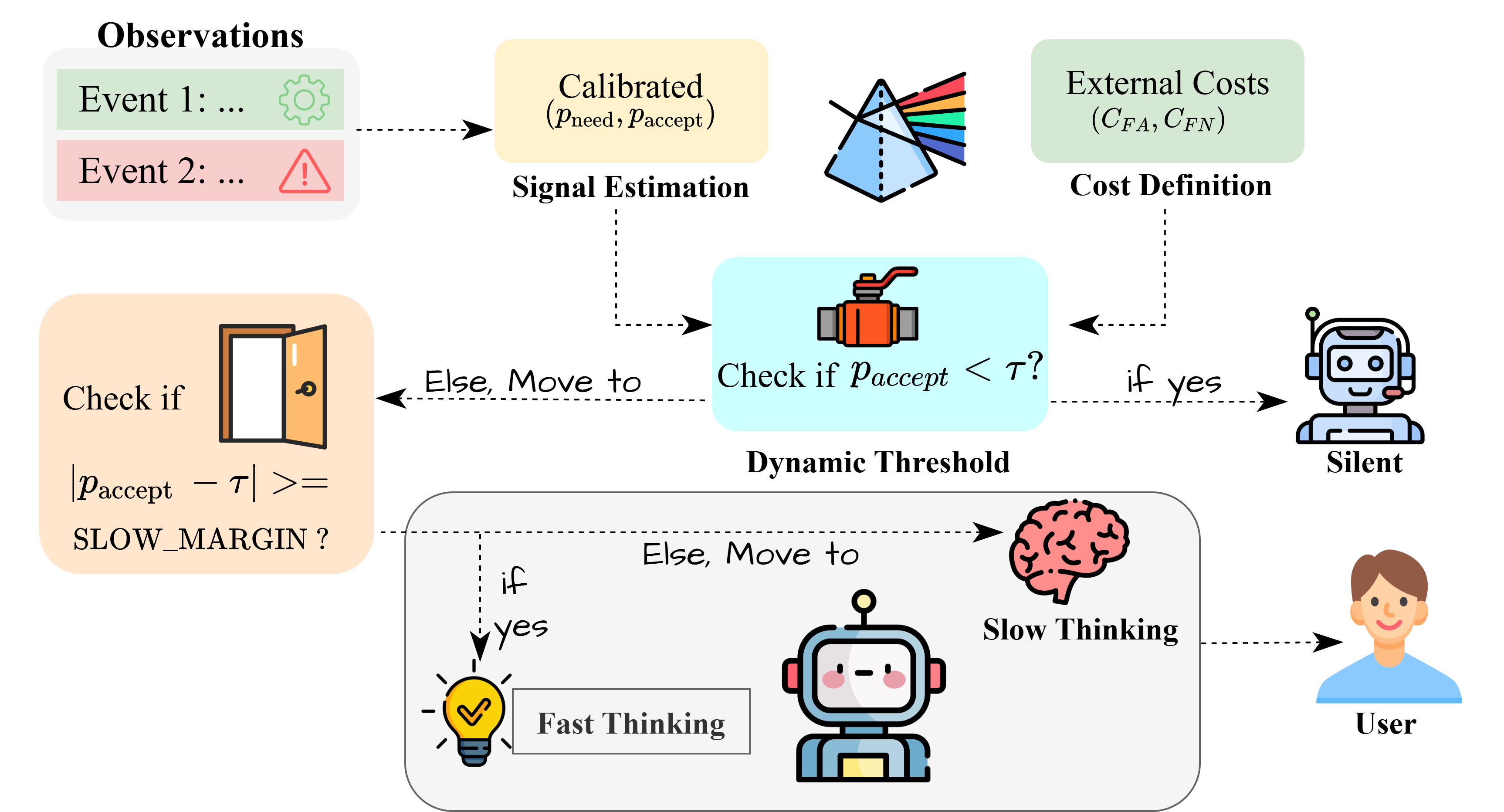}
  \vspace{-0.5em}
  \sloppy 
  \caption{\textbf{\textsc{PRISM} Pipeline.} A model first estimates two calibrated probabilities from the context $X_t$: the user's need for assistance ($p_{\mathrm{need}}$) and the likely acceptance of an offer ($p_{\mathrm{accept}}$). A cost-sensitive gate then decides to intervene only if the acceptance probability meets a dynamic threshold $\tau(p_{\mathrm{need}})$ that accounts for the relative costs of false alarms ($C_{\mathrm{FA}}$) and missed opportunities ($C_{\mathrm{FN}}$). To balance accuracy and efficiency, a resource-intensive slow reasoning pass is triggered only when the initial prediction is ambiguous and falls within a narrow margin $\delta_{\mathrm{slow}}$ of the decision boundary.}
  \label{fig:prism-pipeline}
  \vspace{-0.5em}
\end{figure}

We frame proactive assistance as a cost-sensitive selective decision problem. At each timestep $t$, our agent observes a context $X_t$ and estimates two calibrated probabilities: $p_{\mathrm{need},t} = \Pr(\text{help is needed} \mid X_t)$ and $p_{\mathrm{accept},t} = \Pr(\text{offer is accepted} \mid X_t, \text{intervene})$. The decision to intervene is governed by a cost-sensitive gate. Let $C_{\mathrm{FA}}$ be the cost of a false alarm and $C_{\mathrm{FN}}$ be the cost of a false negative. The agent intervenes only if the estimated acceptance probability $p_{\mathrm{accept},t}$ exceeds a dynamic threshold determined by the estimated need $p_{\mathrm{need},t}$:
\begin{equation}
p_{\mathrm{accept},t} \;\ge\; \tau(p_{\mathrm{need},t}) \;\triangleq\; \frac{C_{\mathrm{FA}}}{C_{\mathrm{FA}} + p_{\mathrm{need},t} \cdot C_{\mathrm{FN}}}.
\label{eq:threshold}
\end{equation}
This rule ensures that as the certainty of need increases, the agent can tolerate a lower chance of acceptance to intervene. To improve decisions in ambiguous cases without universal high cost, we employ a dual-process architecture. A fast model provides initial estimates $(p_{\mathrm{need},t}^{F}, p_{\mathrm{accept},t}^{F})$. A single, more powerful slow reasoning pass is triggered only if the fast estimate is close to the decision boundary, i.e., $|\,p_{\mathrm{accept},t}^{F} - \tau(p_{\mathrm{need},t}^{F})| \le \delta_{\mathrm{slow}}$, where $\delta_{\mathrm{slow}}$ is a configurable margin.

The training process is explicitly aligned with this decision architecture. We employ a method called \textbf{Decision-Consistent Curation (RDC)} to distill knowledge from a powerful teacher model. We filter the training data using a ranking score $R_{\mathrm{DC}}$ that rewards the teacher for accepted interventions and penalizes it for miscalibrated probability estimates:
\begin{equation}
R_{\mathrm{DC}}
\;=\;
y_{\mathrm{accept}}
\;-\;
\big(q_{\mathrm{need}}-y_{\mathrm{need}}\big)^2
\;-\;
\mathbf{1}\{y_{\mathrm{need}}^{(\mathrm{pred})}=1\}\,\big(q_{\mathrm{accept}}-y_{\mathrm{accept}}\big)^2,
\label{eq:rdc}
\end{equation}
where $(q_{\mathrm{need}}, q_{\mathrm{accept}})$ are the teacher's probabilities and $(y_{\mathrm{need}}, y_{\mathrm{accept}})$ are ground-truth labels. A student model is then trained via supervised fine-tuning on the top-ranked, curated data subset $\mathcal{D}^{\star}$. The learning objective $\mathcal{L} = \mathcal{L}_{\mathrm{need}} + \mathcal{L}_{\mathrm{acc}} + \mathcal{L}_{\mathrm{burden}}$ ensures that the student's estimates of $p_{\mathrm{need}}$ and $p_{\mathrm{accept}}$ are well-calibrated (using inverse propensity scoring for $\mathcal{L}_{\mathrm{acc}}$ to handle selection bias) and includes regularization terms that penalize the burden of false alarms and excessive slow reasoning.

\section{Experiments}
\label{sec:exp}
We evaluate \textsc{PRISM} on \textsc{ProactiveBench}, targeting three desiderata of proactive assistance: \emph{timeliness}, \emph{accuracy}, and \emph{cost sensitivity}. Our primary evaluated model is a student trained via \emph{RDC distillation} with supervised fine-tuning (RDC-SFT). We compare against strong open-source and proprietary baselines, and we analyze how \textsc{PRISM}'s decision policy shapes the benefit–burden trade-off. Our study addresses three key questions:

\textbf{(Q1) Comparative effectiveness \& Human Alignment.} How does \textsc{PRISM} perform relative to proprietary frontiers and prior SOTA? Crucially, are the improvements on automatic metrics \emph{validated by human experts}?

\textbf{(Q2) Inference Efficiency.} To what extent does the \emph{slow-margin gating} strategy improve the decision quality without incurring the high latency and compute costs of "System 2" reasoning?

\textbf{(Q3) Mechanism \& Training.} How do \emph{RDC-filtered supervision} and \emph{dual-signal gating} ($p_{\text{need}}, p_{\text{accept}}$) contribute to performance? Is the system robust to calibration drift?

\subsection{Experimental Setup}

\paragraph{Training data and recipe.}
Our student, \textsc{Qwen3-8B-PRISM}, is trained with full-parameter SFT on an \emph{RDC-filtered} subset of the official training split containing \textbf{1{,}800} instances ($<\!\tfrac{1}{3}$ of the original). 
During training, we use AdamW with a learning rate of $1\!\times\!10^{-5}$ and a $0.1$ warm-up ratio under a cosine schedule for $3$ epochs. 
We adopt the Qwen chat template with a $4096$-token context and \texttt{bf16} precision; the effective batch size per device is $4$ (per-device batch size $1$, gradient accumulation $4$). 
Training runs on a single NVIDIA A100 (80\,GB) and completes in approximately 2.5 hours. 
Full hyperparameters are listed in Appendix~\ref{app:train-config}.

\paragraph{Evaluation Dataset.}
All experiments use \textsc{ProactiveBench}~\citep{lu2024proactive}, which contains user event streams from three domains: coding, writing, and daily life. We follow the official split: the training portion is used only for student distillation (and, where applicable, weighted SFT), while the held-out test set (233 clips) is reserved for evaluation. Each clip is a sequence of tuples $(E_t, A_t, S_t)$ for time step $t$, denoting the observed event, user activity, and environment state. At each step, the agent either issues a proactive proposal $P_t$ or remains silent.

\paragraph{Baselines.}
We compare the following systems under a unified evaluation harness.
\begin{enumerate}[leftmargin=1.5em,itemsep=2pt,topsep=2pt]
\item \textbf{Frontier API LMs (proprietary).} Strong proprietary models (e.g., GPT\textendash 4o, Claude~3.5\textendash Sonnet) prompted for proactive assistance without any specialized gating or distillation.
\item \textbf{Open-source base LMs.} Strong open models (e.g., LLaMA\textendash 3.1\textendash 8B, Qwen2\textendash 7B) with the same prompting protocol and no special gating/distillation.
\item \textbf{Proactive Agent (SOTA).} The public pipeline and decision policy from the original \textsc{ProactiveBench} paper, evaluated as provided.
\item \textbf{\textsc{PRISM}.} A student trained via \emph{RDC} distillation with supervised fine-tuning (RDC-SFT), evaluated with the full \textsc{PRISM} decision policy.
\end{enumerate}

\paragraph{Metrics.}
We report standard acceptance-aware detection metrics on the test streams: \emph{Recall}, \emph{Precision}, \emph{Accuracy}, \emph{False-Alarm rate}, and \emph{F1} with a numerical stabilizer $\varepsilon>0$:
\[
\mathrm{F1} \;=\; \frac{2 \cdot \mathrm{Precision} \cdot \mathrm{Recall}}
{\mathrm{Precision} + \mathrm{Recall} + \varepsilon}\,.
\]
To summarize benefit–burden trade-offs we adopt a risk–coverage style view from selective prediction and instantiate it with a cost-sensitive utility, yielding AUDBC as our area-under-curve summary \citep{geifman2017selective}.We do not train a reward model. Instead, we adopt an LLM-as-Judge setup that strictly follows the evaluation guidelines. Let $C_{\mathrm{FA}}>0$ be the per-false-alarm cost and $C_{\mathrm{FN}}\ge 0$ the per-missed-opportunity (false negative) cost. For a fixed $C_{\mathrm{FN}}$, a decision policy induces (i) a normalized burden $\mathcal{B}(C_{\mathrm{FN}})\in[0,1]$ (we use the false-alarm rate) and (ii) a normalized net \emph{benefit} $\Delta \mathcal{U}(C_{\mathrm{FN}})\in[0,1]$ defined as the utility improvement over the \emph{always-silent} baseline after accounting for $C_{\mathrm{FA}}$ and $C_{\mathrm{FN}}$. We summarize performance by integrating $\Delta \mathcal{U}$ along the burden axis:
\begin{align}
\mathrm{AUDBC}
&= \int_{0}^{1} \Delta \mathcal{U}(b)\, \mathrm{d}b
\;\;\in [0,1], \label{eq:audbc-def} \\[4pt]
\text{with } 
\Delta \mathcal{U}(b)
&= \frac{\mathbb{E}\!\left[\mathrm{TP}\right] - C_{\mathrm{FA}}\mathbb{E}\!\left[\mathrm{FP}\right] - C_{\mathrm{FN}}\mathbb{E}\!\left[\mathrm{FN}\right] \;-\; \mathbb{E}\!\left[\mathrm{TP}\right]_{\text{silent}}}{Z}\,,
\quad b \equiv \mathcal{B}(C_{\mathrm{FN}}),
\label{eq:deltaU}
\end{align}
where $\mathrm{TP},\mathrm{FP},\mathrm{FN}$ are expectations per time step under the evaluated policy, $\mathbb{E}[\mathrm{TP}]_{\text{silent}}$ is the always-silent baseline utility (zero for detection tasks), and $Z>0$ normalizes to $[0,1]$.\footnote{In practice we sweep $C_{\mathrm{FN}}$ over a grid, obtain pairs $\big(\mathcal{B}(C_{\mathrm{FN}}^{(i)}),\,\Delta \mathcal{U}(C_{\mathrm{FN}}^{(i)})\big)$ sorted by burden, and estimate \eqref{eq:audbc-def} via the trapezoidal rule:
$\mathrm{AUDBC}\approx \sum_{i} \tfrac{1}{2}\big(\Delta \mathcal{U}_{i+1}+\Delta \mathcal{U}_{i}\big)\big(b_{i+1}-b_i\big)$.}
A higher AUDBC indicates consistently larger net benefit at the same or lower burden across cost settings.

\paragraph{Judging and Validation Protocol.}
We do not train a reward model. Instead, we adopt an LLM-as-Judge setup that \emph{strictly} follows the official \textsc{ProactiveBench} rubric.
At each step $t$, we employ a majority vote among three strong frontier models—\emph{DeepSeek-R1}, \emph{GPT-4o}, and \emph{Claude~3.5-Sonnet}—to determine whether help is \emph{needed} and likely to be \emph{accepted}. The gold label is derived as $y^\star(t)=y_{\text{need}}(t)\wedge y_{\text{accept}}(t)$.
Acknowledging recent findings on potential biases and overconfidence in LLM judges \citep{ye2024justice}, we rigorously validated our protocol against human judgment.
We sampled 229 events and enlisted 7 volunteers from diverse backgrounds (including undergraduate and graduate students in programming, marketing, and business) to annotate intervention needs.
Our ensemble judge achieved 89.1\% agreement with human consensus and a Cohen’s $\kappa$ of 0.71 (substantial agreement), confirming that it serves as a reliable proxy for human evaluation in this domain. 
Detailed agreement statistics and comparisons across different judge pools are provided in Appendix~\ref{app:judge_validation}.

\subsection{Main Results: Proactive Assistance Performance}

\begin{table}[t]
\centering
\small
\setlength{\tabcolsep}{6pt}
\renewcommand{\arraystretch}{1.12}
\newcommand{\best}[1]{\textbf{#1}}
\newcommand{\second}[1]{\underline{#1}}
\begin{tabular}{lccccc}
\toprule
\textbf{Model} & \textbf{Recall $\uparrow$} & \textbf{Precision $\uparrow$} & \textbf{Accuracy $\uparrow$} & \textbf{False-Alarm $\downarrow$} & \textbf{F1-Score $\uparrow$} \\
\midrule
\multicolumn{6}{l}{\textit{Automatic Evaluation (LLM-as-Judge)}} \\
Claude-3.5-Sonnet      & 97.89\% & 45.37\% & 49.78\% & 54.63\% & 62.00\% \\
GPT-4o-mini            & \best{100.00\%} & 35.28\% & 36.12\% & 64.73\% & 52.15\% \\
GPT-4o                 & 98.11\% & 48.15\% & 49.78\% & 51.85\% & 64.60\% \\
\midrule
LLaMA-3.1-8B-Proactive & 99.06\% & 49.76\% & 52.86\% & 50.24\% & 66.25\% \\
Qwen2-7B-Proactive     & \best{100.00\%} & 49.78\% & 50.66\% & 50.22\% & 66.47\% \\
\midrule
Deepseek-R1     & 98.12\% & 72.35\% & 72.96\% & 27.64\% & 83.28\% \\
Qwen3-8B     & 73.79\% & 73.33\% & 67.85\% & 26.67\% & 73.34\% \\
\midrule
\textbf{Qwen3-8B-PRISM} & \second{98.88\%} & \best{77.05\%} & \best{76.39\%} & \best{22.94\%} & \best{86.61\%} \\
\midrule[\heavyrulewidth]
\multicolumn{6}{l}{\textit{Human Expert Evaluation}} \\
DeepSeek-R1 & 99.05\% & 70.03\% & 71.12\% & 29.70\% & 82.05\% \\
\textbf{Qwen3-8B-PRISM} & \textbf{99.41\%} & \textbf{74.01\%} & \textbf{73.79\%} & \textbf{25.91\%} & \textbf{84.85\%} \\
\bottomrule
\end{tabular}
\caption{\textbf{Main comparison on \textsc{ProactiveBench} (held-out test set).}
We report performance under two protocols: \textit{Automatic Evaluation} (using our validated LLM-ensemble) and \textit{Human Expert Evaluation} (bottom rows).
\textsc{PRISM} (our RDC-distilled student with dynamic gating) consistently outperforms strong proprietary and open-source baselines across both regimes. Notably, it surpasses its teacher (\emph{DeepSeek-R1}) in Precision and False-Alarm rate while maintaining near-perfect Recall, validated by human experts.}
\label{tab:main}
\end{table}

Table~\ref{tab:main} demonstrates that \textsc{PRISM} establishes a new state-of-the-art by effectively curbing the "over-proactiveness" plague inherent in existing models.

\textbf{Solving the False-Alarm Dilemma.} 
Baseline agents, including strong proprietary models (e.g., GPT-4o) and prior SOTA (Qwen2-7B-Proactive), suffer from excessive chatter, with false-alarm rates consistently exceeding 50\%. 
\textsc{PRISM} drastically corrects this behavior. Compared to Qwen2-7B-Proactive, it improves the \textit{F1-Score} by over 20 points ($66.47 \to 86.61$) while cutting the \textit{False-Alarm Rate} by nearly 54\% relative ($50.22 \to 22.94$). This massive gain in precision ($49.78 \to 77.05$) is achieved with negligible impact on the near-saturated recall.

\textbf{Statistically Significant Superiority.} 
Crucially, the \textsc{PRISM}-trained student does not merely imitate its teacher; it surpasses it. 
Despite using a smaller backbone, our model improves upon the powerful \emph{DeepSeek-R1} teacher. 
Statistical significance testing (details in Appendix~\ref{app:significance}) confirms that while \textsc{PRISM} maintains \textbf{parity in Recall} ($+0.76\%$, $p=0.115$), it achieves \textbf{statistically significant improvements} in \textit{Precision} ($+4.70\%$, $p<0.001$) and \textit{False-Alarm Rate} reduction ($p<0.001$). This confirms that the precision gains are not random fluctuations but a result of our structured distillation.

\textbf{Human-Verified Reliability.} 
This advantage is robust to the evaluation protocol. As shown in the bottom section of Table~\ref{tab:main}, human experts corroborate the automatic metrics: \textsc{PRISM} delivers a superior user experience, achieving a higher F1-score than DeepSeek-R1 ($84.85\%$ vs.\ $82.05\%$) and a lower false-alarm rate ($25.91\%$ vs.\ $29.70\%$). This alignment between LLM and human judges (Cohen's $\kappa=0.71$) underscores the validity of our results.

\textbf{Attribution.} 
We attribute these gains to two factors:
(1) \textbf{Need/Accept Disentangling}: RDC-SFT teaches the student to explicitly estimate $p_{\text{accept}}$, filtering out "correct but unwanted" proposals that the teacher might generate.
(2) \textbf{Cost-Sensitive Gating}: The dynamic threshold and slow-margin gate route only borderline cases to deliberation, recovering true positives without the computational cost or hallucination risk of universal reasoning.

\subsection{Ablation Studies}
\label{sec:ablexp}
We categorize our analyses into four parts using the Qwen3-8B backbone: 
(A) evaluates thresholding on the \emph{base} model to demonstrate the need for calibration; 
(B) analyzes the efficiency-quality trade-off of our inference strategy; 
(C) compares training paradigms; 
and (D) dissects the contribution of decision signals and robustness to cost/parameter shifts.

\paragraph{(A) Thresholds (policy).}
We compare three decision rules under \textbf{direct inference} (no RDC-SFT, no post-hoc calibration): 
(a) \emph{Fixed} threshold, 
(b) \emph{Static cost} threshold $\tau(C_{\mathrm{FA}},C_{\mathrm{FN}})$, 
(c) \emph{Dynamic} threshold $\tau(p_{\text{need}})$ computed from the model’s (uncalibrated) $p_{\text{need}}$.

\begin{table}[t]
\centering
\small
\setlength{\tabcolsep}{6pt}
\renewcommand{\arraystretch}{1.12}
\newcommand{\best}[1]{\textbf{#1}}
\begin{tabular}{lccccc}
\toprule
\textbf{Policy} & \textbf{Recall $\uparrow$} & \textbf{Precision $\uparrow$} & \textbf{Accuracy $\uparrow$} & \textbf{False-Alarm $\downarrow$} & \textbf{F1-Score $\uparrow$} \\
\midrule
Fixed $\tau{=}0.5$                       & \best{92.42\%} & \best{71.96\%} & \best{76.03\%} & \best{28.23\%} & \best{80.74\%} \\
Static-cost $\tau(C_{\mathrm{FA}},C_{\mathrm{FN}})$ & 38.54\% & 59.67\% & 63.94\% & 40.32\% & 46.83\% \\
Dynamic $\tau(p_{\text{need}})$          & 72.41\% & 68.29\% & 69.52\% & 31.70\% & 70.29\% \\
\bottomrule
\end{tabular}
\caption{\textbf{Ablation A: thresholding only (no gating).} Fixed threshold dominates when $p_{\text{need}}$ is uncalibrated.}
\label{tab:ablation-thresholds}
\end{table}

\noindent\textit{Analysis.}
The \emph{fixed} rule yields the best overall balance (F1 $80.74$) and lowest false alarms ($28.23$). The \emph{static-cost} rule is overly conservative on this distribution, collapsing recall ($38.54$) and harming F1. The \emph{dynamic} $\tau(p_{\text{need}})$ improves markedly over static-cost (F1 $70.29$ vs.\ $46.83$) but still underperforms fixed, with higher false alarms ($31.70$ vs.\ $28.23$). 

This pattern is consistent with \textbf{probability miscalibration}: without RDC-SFT or post-hoc calibration, $p_{\text{need}}$ is noisy near the decision boundary, so instance-wise thresholding does not reliably separate needed from non-needed steps. In our main results, where $p_{\text{need}}$ (and $p_{\text{accept}}$) are calibrated via RDC-SFT, dynamic policies surpass fixed thresholds. Here, the ablation isolates that \emph{better policies require calibrated signals}; Post-hoc temperature scaling is a simple but surprisingly effective fix for probability miscalibration, and we observe similar gains here when calibrating on a held-out judge-labeled set \citep{guo2017calibration}.

\paragraph{(B) Inference Strategy (Efficiency).}
We examine whether \textsc{PRISM} can deliver "System 2" quality at "System 1" speeds. Table~\ref{tab:ablation-strategy} compares strategies using the RDC-SFT student.
\begin{table}[h]
\centering
\small
\setlength{\tabcolsep}{4pt}
\renewcommand{\arraystretch}{1.15}
\newcommand{\best}[1]{\textbf{#1}}
\begin{tabular}{lccccccc}
\toprule
\textbf{Strategy} & \textbf{Recall $\uparrow$} & \textbf{Precision $\uparrow$} & \textbf{False-Alarm $\downarrow$} & \textbf{F1-Score $\uparrow$} & \textbf{AUDBC $\uparrow$} & \textbf{Tokens} & \textbf{Lat (P95)} \\
\midrule
Fast-only      & \best{100\%} & 71.08\% & 28.92\% & 83.09\% & 79.26\% & \best{510} & \best{176ms} \\
Slow-only      & \best{100\%} & 75.17\% & 24.83\% & 86.79\% & 81.43\% & 693 & 312ms \\
\textbf{Slow-on-margin} & \best{100\%} & \best{78.81\%} & \best{21.19\%} & \best{88.15\%} & \best{82.72\%} & 541 & 196ms \\
\bottomrule
\end{tabular}
\caption{\textbf{Ablation B: Efficiency.} Slow-on-margin ($\delta=0.1$) achieves the best performance with negligible latency overhead compared to Fast-only.}
\label{tab:ablation-strategy}
\end{table}

\begin{figure}[t]
\centering
\includegraphics[width=0.65\textwidth]{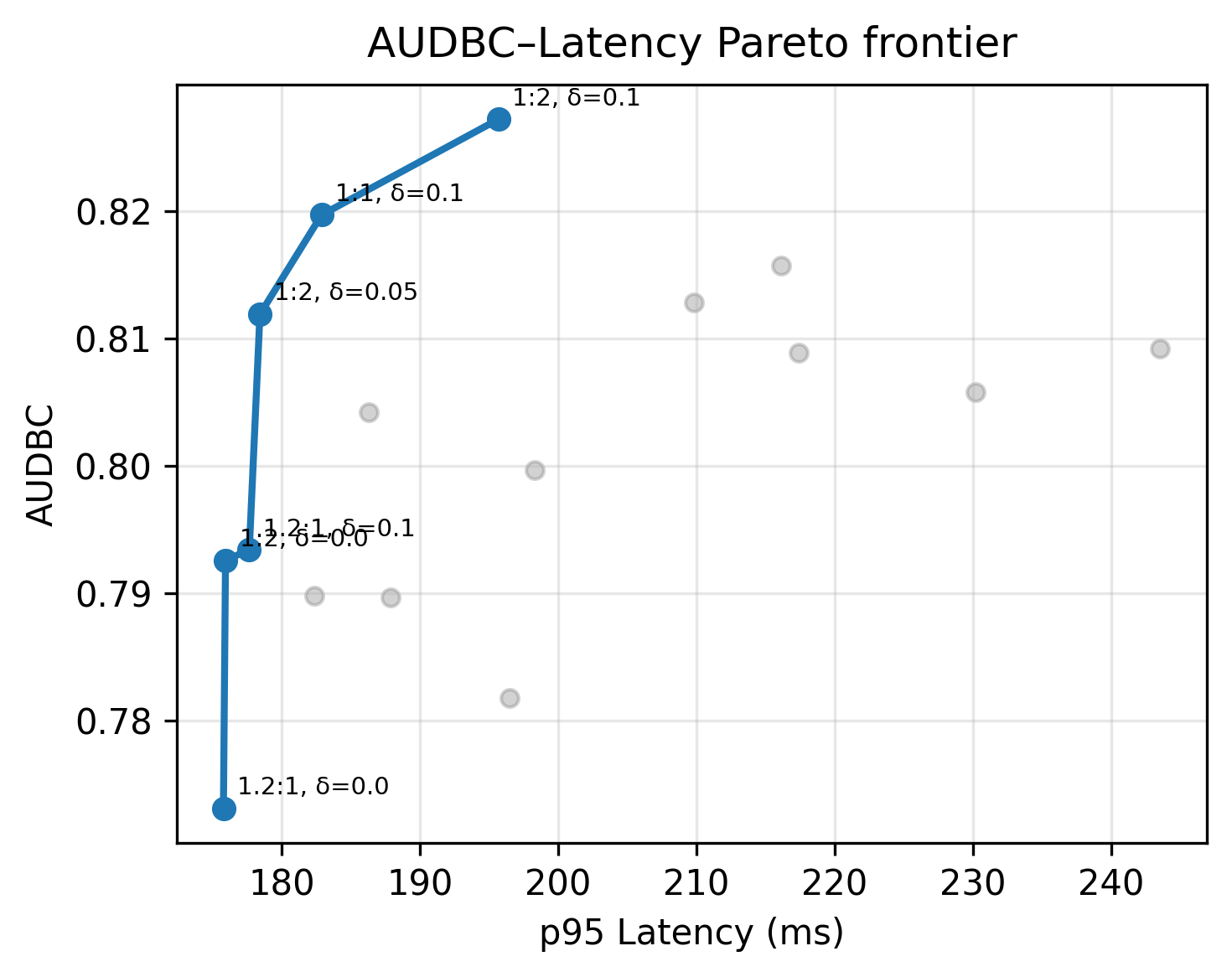}
\caption{\textbf{Efficiency-Quality Pareto Frontier.} 
We plot AUDBC against P95 Latency across various cost ratios ($C_{\mathrm{FA}}:C_{\mathrm{FN}}$) and slow-margins ($\delta$). 
The \textbf{blue line} traces the optimal frontier, showing that \textsc{PRISM} (particularly at $\delta=0.1$) achieves maximal decision benefit with minimal latency overhead ($\sim$20ms). 
Grey points represent suboptimal configurations (e.g., higher $\delta$ or uncalibrated baselines) that incur higher latency without corresponding utility gains.}
\label{fig:pareto}
\end{figure}
\noindent\textit{Analysis.}
Table~\ref{tab:ablation-strategy} reveals a compelling efficiency-quality trade-off, positioning \textsc{PRISM} as a Pareto improvement over static inference strategies. 
By dynamically routing only $\sim$11\% of borderline cases to the slow reasoning path, \textsc{PRISM} achieves a massive quality boost---improving F1 by \textbf{+5.06 points} over \emph{Fast-only} (83.09\% $\to$ 88.15\%)---while incurring a negligible P95 latency overhead of just \textbf{20ms}. 
We selected the margin $\delta=0.1$ based on a detailed sensitivity analysis (see Appendix~\ref{app:inference-analysis} and Figure~\ref{fig:margin-plots}), which identifies it as the ``sweet spot'' before latency costs scale linearly. 
As visualized in Figure~\ref{fig:pareto}, our method defines the optimal Pareto frontier (blue line). 
This frontier is derived from a comprehensive 2D grid sweep across varying cost ratios and margins (fully detailed in Appendix Table~\ref{tab:app-grid-sweep}), demonstrating that margin-based gating effectively captures the utility gains of ``System 2'' reasoning without the prohibitive latency of \emph{Slow-only} execution.

\paragraph{(C) Distillation and SFT pathways (training).}
We compare four training pathways on the \textbf{same backbone (Qwen3-8B)} with matched decoding and judging.
(i) \textbf{SFT}: vanilla supervised fine-tuning on the \emph{unfiltered} dataset (no RDC screening).
(ii) \textbf{DFT}: \emph{Dynamic Fine-Tuning}~\cite{wu2025generalization}, which stabilizes token-level gradients by dynamically rescaling the objective with each token’s model probability; in our setting DFT is applied as a \emph{second-stage} fine-tune starting from the RDC-SFT checkpoint.
(iii) \textbf{Weighted-SFT}: a second-stage fine-tune from RDC-SFT using an RL-like scalar weight $R_{\text{total}}$ for each example, adapted from Reward-Weighted Regression (RWR)~\cite{peters2007reinforcement, peng2019advantage}.
(iv) \textbf{RDC-SFT (ours)}: full-parameter SFT on an \emph{RDC-filtered} dataset with explicit $(p_{\text{need}}, p_{\text{accept}})$ supervision.

\begin{table}[t]
\centering
\small
\setlength{\tabcolsep}{6pt}
\renewcommand{\arraystretch}{1.12}
\newcommand{\best}[1]{\textbf{#1}}
\newcommand{\second}[1]{\underline{#1}}
\begin{tabular}{lccccc}
\toprule
\textbf{Training Pathway} & \textbf{Recall $\uparrow$} & \textbf{Precision $\uparrow$} & \textbf{Accuracy $\uparrow$} & \textbf{False-Alarm $\downarrow$} & \textbf{F1-Score $\uparrow$} \\
\midrule
SFT                & \best{100.00\%} & 72.43\% & 69.52\% & 27.56\% & 76.09\% \\
DFT        & 80.14\% & 72.43\% & 69.52\% & 27.56\% & 76.09\% \\
Weighted-SFT  & 93.70\% & 70.68\% & 72.10\% & 29.31\% & 80.59\% \\
\textbf{RDC-SFT(ours)}           & \second{98.88\%} & \best{77.05\%} & \best{76.39\%} & \best{22.49\%} & \best{86.61\%} \\
\bottomrule
\end{tabular}
\caption{\textbf{Ablation C: training paradigms on Qwen3-8B.} RDC-SFT (ours) trains on RDC-filtered supervision with explicit need/accept targets; DFT and Weighted-SFT are \emph{second-stage} finetunes starting from the RDC-SFT checkpoint.}
\label{tab:ablation-training}
\end{table}

\noindent\textit{Analysis.}
RDC-SFT attains the best overall balance: it raises \textit{F1} to $86.61$ (vs.\ $76.09$ for SFT, $+10.52$ pts; vs.\ $80.59$ for Weighted-SFT, $+6.02$ pts) and yields the lowest \textit{False-Alarm} ($22.49$, a reduction of $-5.07$ vs.\ SFT and $-6.82$ vs.\ Weighted-SFT), while keeping \textit{Recall} high ($98.88$). 
Vanilla SFT on the unfiltered corpus achieves perfect recall but at the cost of higher chatter (FA $27.56$) and lower precision ($72.43$), reflecting label noise and timing mismatches. 
Weighted-SFT improves \textit{F1} over SFT, but its scalar weights $R_{\text{total}}$—which depend on uncalibrated $(q_{\text{need}},q_{\text{accept}})$—inflate false alarms (FA $29.31$), indicating sensitivity to noise and acceptance miscalibration. 
Second-stage DFT underperforms here (Recall $80.14$): when applied \emph{after} RDC-SFT, dynamic rescaling can blunt acceptance-aware gradients learned during distillation, yielding conservative triggering without precision gains (same $72.43$ as SFT). 

Overall, these results suggest that \textbf{data quality and target structure dominate}: RDC filtering plus explicit $(p_{\text{need}},p_{\text{accept}})$ supervision improves precision and suppresses false alarms without materially sacrificing recall, whereas post-hoc reweighting (Weighted-SFT) or probability-rescaled objectives (DFT) are less effective in the presence of acceptance/timing noise.

\paragraph{(D) Decision Mechanism \& Calibration.}
We isolate the contribution of our dual gating signals ($p_{\text{need}}$, $p_{\text{accept}}$) and quantify the impact of post-hoc calibration on the RDC-SFT student.

\begin{table}[h]
\centering
\small
\setlength{\tabcolsep}{3.5pt} 
\renewcommand{\arraystretch}{1.15}
\newcommand{\best}[1]{\textbf{#1}}
\begin{tabular}{lccccc}
\toprule
\textbf{Configuration} & \textbf{Recall $\uparrow$} & \textbf{Precision $\uparrow$} & \textbf{False-Alarm $\downarrow$} & \textbf{F1-Score $\uparrow$} & \textbf{AUDBC $\uparrow$} \\
\midrule
$p_{\text{accept}}$ only $(p_{\text{need}}{=}1)$ & 99.95\% & 46.20\% & 62.50\% & 63.19\% & 58.40\% \\
$p_{\text{need}}$ only ($p_{\text{acc}}{=}\tau(p_{\text{need}})$) & 99.15\% & 69.50\% & 29.10\% & 81.72\% & 82.45\% \\
Uncalibrated $(p_{\text{need}}, p_{\text{accept}})$ & 98.80\% & 74.77\% & 25.23\% & 85.12\% & 86.43\% \\
\textbf{Calibrated (Ours)} & \textbf{98.88\%} & \textbf{77.05\%} & \textbf{22.94\%} & \textbf{86.61\%} & \textbf{88.52\%} \\
\bottomrule
\end{tabular}
\caption{\textbf{Ablation D: Gating Signals \& Calibration.} Using both signals is crucial for reducing false alarms. Post-hoc calibration further aligns probabilities with risk, boosting Precision and AUDBC.}
\label{tab:ablation-mechanism}
\end{table}

\noindent\textit{Analysis.}
Table~\ref{tab:ablation-mechanism} validates two key components of \textsc{PRISM}. 
First, \textbf{Need/Accept Disentanglement}: Relying solely on $p_{\text{accept}}$ (Row 1) causes a catastrophic rise in False Alarms ($62.50\%$) because users often accept helpful but ill-timed suggestions. Using $p_{\text{need}}$ alone (Row 2) is safer but suboptimal. Combining them (Rows 3-4) balances timeliness with acceptance.
Second, \textbf{Calibration Utility}: While the uncalibrated RDC-SFT model is already strong (Row 3), post-hoc temperature scaling (Row 4) significantly reduces the False-Alarm rate ($25.23\% \to 22.94\%$) and improves AUDBC ($+2.09$). 
This precise probabilistic alignment also underpins the model's robustness to varying cost ratios ($C_{\mathrm{FA}}:C_{\mathrm{FN}}$) and domain shifts, as detailed in Appendix~\ref{app:robustness}.

\subsection{Case Study: Structured Traces that Motivate \textsc{PRISM}}
\label{sec:case}

\begin{figure}[t]
  \centering
  \includegraphics[width=0.90\textwidth]{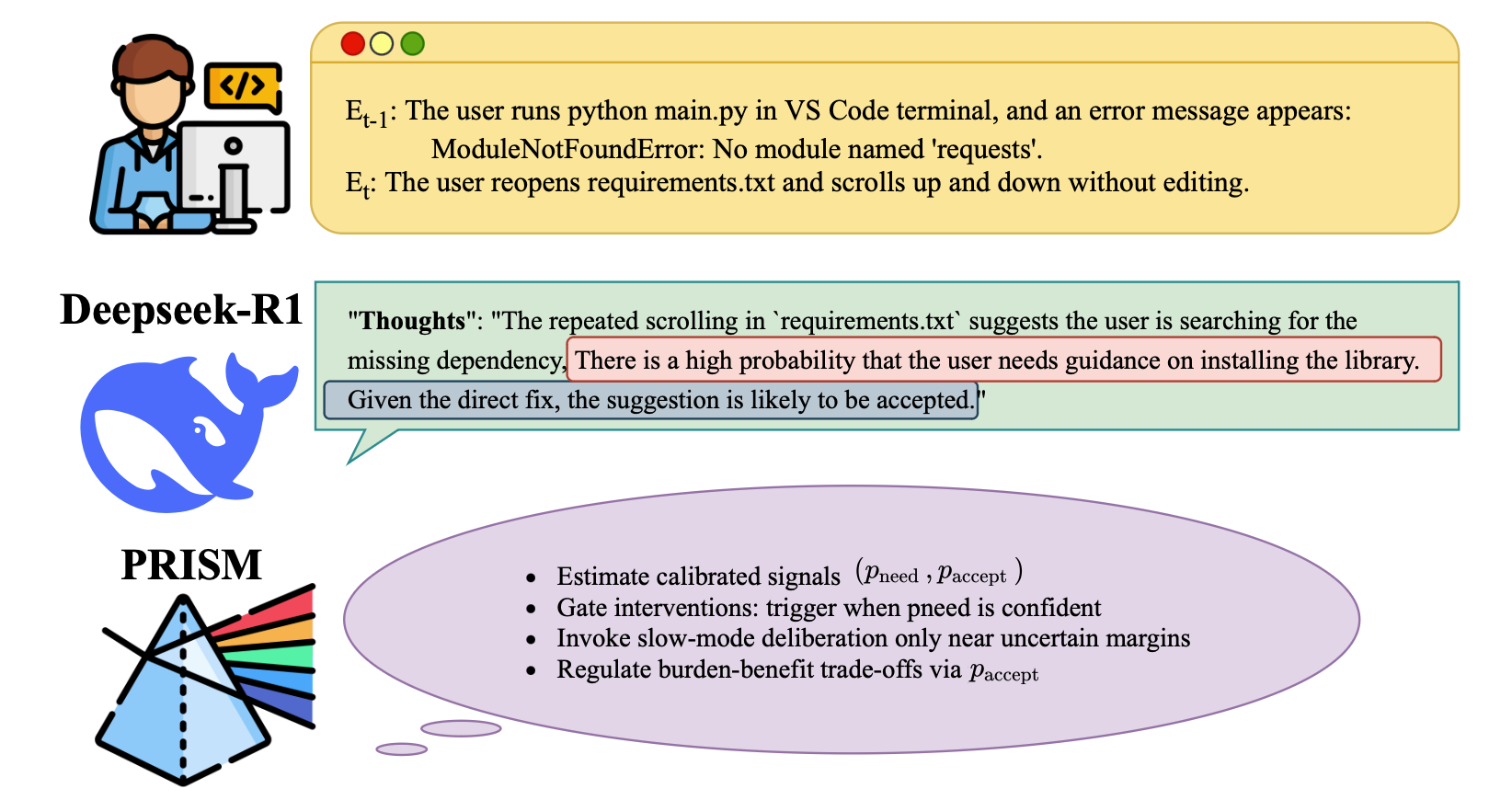}
  \caption{\textbf{Structured trace motivating \textsc{PRISM}.}
  A ProactiveBench coding event (top) yields a \emph{structured} output from a reasoning-capable teacher (DeepSeek-R1; middle).
  \textsc{PRISM} (bottom) abstracts the teacher’s implicit hints into calibrated signals \(p_{\text{need}}\) and \(p_{\text{accept}}\),
  uses a lightweight gate to trigger interventions when \(p_{\text{need}}\) is confident, and invokes slow-mode deliberation only near uncertain margins; the final offer is regulated by \(p_{\text{accept}}\).
  Only structured fields are analyzed; no free-form chain-of-thought is stored or surfaced.}
  \label{fig:structured-to-prism}
\end{figure}

\textbf{Observation.}
We first benchmark a reasoning-capable model (DeepSeek-R1) on ProactiveBench and observe substantially stronger proactive assistance than non-reasoning baselines. To probe what drives this gap \emph{without exposing free-form scratchpads}, we elicit \emph{structured} traces via the \textsc{Proactive Agent} workflow:
\small
We analyze only these structured fields offline under de-identified, non-user-facing settings.

\textbf{Decomposable decision pattern.}
Across representative traces, we find that high-quality assistance co-occurs with two separable signals:
(i) whether the user likely \emph{needs} intervention at the current event, and
(ii) conditional on proposing, whether the suggestion is likely to be \emph{accepted}.
We formalize these signals as probabilities \(p_{\text{need}} \in [0,1]\) and \(p_{\text{accept}} \in [0,1]\),
estimated from the structured outputs above (e.g., \texttt{Purpose}/\texttt{Proactive Task}/\texttt{Response})
via a small calibrator trained on held-out judgments.\footnote{No free-form chain-of-thought is stored
or surfaced; analysis uses only structured fields and aggregate labels.}

\textbf{Design implication.}
This decomposition motivates \textsc{PRISM}: a lightweight gate that triggers intervention when
\(p_{\text{need}}\) is confident, and selectively invokes slow-mode deliberation near decision margins; meanwhile
the final offer is regulated by the acceptance propensity \(p_{\text{accept}}\), enabling explicit control of
burden--benefit trade-offs.
Figure~\ref{fig:structured-to-prism} illustrates a typical trace and its mapping to
\((p_{\text{need}}, p_{\text{accept}})\) used by our gate.

\section{Discussion}
\label{sec:discussion}

\textbf{Reframing Proactivity as Calibrated Control.} 
Current proactive systems often conflate the \emph{capability} to generate a solution with the \emph{decision} to intervene, leading to the ``over-eagerness'' observed in baselines. \textsc{PRISM} demonstrates that proactivity is better modeled as a dual-variable control problem: disentangling the probability of need ($p_{\text{need}}$) from acceptance ($p_{\text{accept}}$). This separation allows the agent to remain silent even when it can technically solve a problem, simply because the social or cognitive cost of interruption outweighs the probabilistic benefit.

\textbf{Economic Rationality in Neural Reasoning.} 
Our \emph{festina lente} approach operationalizes the ``System 1 vs. System 2'' paradigm through an economic lens. Rather than treating long-chain reasoning as a default mode, \textsc{PRISM} treats it as a scarce resource to be allocated only when the expected utility of reducing uncertainty exceeds the computational cost. The success of our slow-on-margin gating ($11\%$ slow rate achieving Pareto optimality) suggests that for proactive agents, \emph{uncertainty estimation is as critical as reasoning capability}.

\textbf{Data Efficiency and Control Surfaces.} 
Finally, our results challenge the trend of indiscriminately scaling supervision. By filtering for calibrated probabilities via RDC, we show that a compact student can surpass a stronger teacher in alignment-critical metrics such as precision and false-alarm rate. Crucially, \textsc{PRISM} exposes an interpretable control surface—defined by cost terms ($C_{\mathrm{FA}}, C_{\mathrm{FN}}$) and margin $\delta$—that bridges static training with dynamic deployment, allowing developers to tune agent assertiveness for different user personas without retraining.

\section{Limitations and Future Work}
\label{sec:limitations}
While our evaluation on \textsc{ProactiveBench} is rigorously validated, reliance on LLM-as-Judge proxies may still diverge from real-world user acceptance and cannot fully capture the cognitive load of ill-timed interruptions during complex flow states. Methodologically, \textsc{PRISM} assumes that the probability calibration learned offline remains stable; in deployment, severe distribution shifts could degrade the accuracy of $p_{\text{need}}$ and $p_{\text{accept}}$, rendering the dynamic threshold suboptimal without online recalibration mechanisms. A subtle but critical limitation of our current formulation is the assumption of stationary costs ($C_{\mathrm{FA}}, C_{\mathrm{FN}}$) throughout an interaction. In realistic scenarios, the cost of a missed intervention ($C_{\mathrm{FN}}$) often increases as a deadline approaches, while the penalty for false alarms ($C_{\mathrm{FA}}$) might decrease as a user becomes visibly stuck. Implementing such fully time-varying costs is challenging in online settings where the total episode length is unknown. Future work could address this by learning a meta-controller that adapts cost sensitivity dynamically based on estimated progress or implicit feedback. Finally, distillation processes may inherit teacher biases, and future iterations must integrate hard safety constraints to ensure that utility-based gating does not suppress critical security warnings. Future iterations could also integrate PRISM with advanced agent workflows and tool-use optimization frameworks \citep{tan2025meta} to handle more complex, multi-step maintenance tasks.

\appendix

\bibliography{iclr2026_conference}
\bibliographystyle{iclr2026_conference}

\section*{Appendix}
\addcontentsline{toc}{section}{Appendix}

\section*{Ethics Statement}
This work does not present any significant ethical concerns. All datasets used in our research, such as ProactiveBench, are publicly available and well-established academic benchmarks. The data are anonymized and does not contain personally identifiable information. Our proposed method, which focuses on improving model efficiency and accuracy, does not have foreseeable direct negative societal impacts. No human subjects were involved in our experiments.

\section*{Statement on LLM Usage}
During the preparation of this work, we used LLMs, such as ChatGPT, for assistance with language editing, grammar correction, and improving readability. All core ideas, methodologies, experimental designs, results, and conclusions were originally conceived and articulated by the human authors. We have carefully reviewed and edited all text generated or modified by LLMs and take full responsibility for the final content of this paper. LLMs were not listed as authors.

\section{Deriving the Cost-Sensitive Threshold}
\label{app:derivation}

We provide a self-contained derivation of the gate in \cref{eq:threshold} and its odds form \cref{eq:app-odds} from Bayes-risk minimization under asymmetric costs.

\begin{proposition}[Bayes-optimal gate under asymmetric costs]
\label{prop:bayes-gate}
Let $p_{\text{need}}=\Pr(Y_{\text{need}}{=}1\mid X_t)$ and $p_{\text{accept}}=\Pr(Y_{\text{accept}}{=}1\mid X_t)$.
Assume costs $C_{\mathrm{FA}}>0$ for proposing when $Y_{\text{accept}}=0$ (false alarm) and $C_{\mathrm{FN}}>0$ for staying silent when help is needed and would be accepted (missed help), with correct decisions costing zero.
Then the Bayes-optimal decision is
\begin{equation}
\textsc{intervene}\ \Longleftrightarrow\ 
p_{\text{accept}}\ \ge\ \tau\!\big(p_{\text{need}}\big),
\quad
\tau(p_{\text{need}})=\frac{C_{\mathrm{FA}}}{C_{\mathrm{FA}}+p_{\text{need}}\,C_{\mathrm{FN}}}.
\label{eq:app-threshold}
\end{equation}
Equivalently (odds form),
\begin{equation}
\frac{p_{\text{accept}}}{1-p_{\text{accept}}}\ \ge\ 
\frac{p_{\text{need}}\,C_{\mathrm{FN}}}{C_{\mathrm{FA}}}.
\label{eq:app-odds}
\end{equation}
\end{proposition}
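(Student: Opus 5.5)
The plan is to compute the conditional Bayes risk of each of the two actions given $X_t$, and then compare them. First, write the loss table implied by the hypotheses. Proposing costs $C_{\mathrm{FA}}$ exactly on the event $\{Y_{\text{accept}}=0\}$ and zero otherwise. Staying silent costs $C_{\mathrm{FN}}$ exactly on the event $\{Y_{\text{need}}=1,\,Y_{\text{accept}}=1\}$ and zero otherwise. Taking conditional expectations given $X_t$ gives
\[
R(\textsc{intervene}\mid X_t)=C_{\mathrm{FA}}\,(1-p_{\text{accept}}),\qquad
R(\textsc{silent}\mid X_t)=C_{\mathrm{FN}}\,\Pr(Y_{\text{need}}{=}1,Y_{\text{accept}}{=}1\mid X_t).
\]
Since the Bayes-optimal rule minimizes conditional risk pointwise in $X_t$, it intervenes exactly when the first quantity is at most the second. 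Ties are broken toward intervening, which matches the ``$\ge$'' in \cref{eq:app-threshold}.

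The step I expect to be the main obstacle is expressing the joint probability $\Pr(Y_{\text{need}}{=}1,Y_{\text{accept}}{=}1\mid X_t)$ in terms of the two marginals alone. The statement as written does not determine it, so I would make the modeling assumption explicit: $Y_{\text{need}}$ and $Y_{\text{accept}}$ are conditionally independent given $X_t$. Under this assumption the joint probability equals $p_{\text{need}}\,p_{\text{accept}}$. An alternative reading gives the same product: interpret $p_{\text{accept}}$ as acceptance conditional on need, in the spirit of the definition in \cref{sec:method}, where $p_{\text{accept}}$ is conditioned on intervening. I would state whichever reading is adopted as part of the hypotheses of \cref{prop:bayes-gate}. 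Without such an assumption, only the Fr\'echet bounds on the joint probability are available, and no closed-form gate follows.

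With the joint probability equal to $p_{\text{need}}p_{\text{accept}}$, the comparison becomes $C_{\mathrm{FA}}(1-p_{\text{accept}})\le C_{\mathrm{FN}}\,p_{\text{need}}\,p_{\text{accept}}$. Collecting the $p_{\text{accept}}$ terms gives $p_{\text{accept}}\,(C_{\mathrm{FA}}+p_{\text{need}}C_{\mathrm{FN}})\ge C_{\mathrm{FA}}$. The bracket is strictly positive because $C_{\mathrm{FA}}>0$, so dividing by it is legitimate and yields \cref{eq:app-threshold}. I would also note that $\tau$ lies in $(0,1]$ and is nonincreasing in $p_{\text{need}}$, which gives the monotonicity claimed in the introduction.

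For the odds form, I would divide the same inequality by $1-p_{\text{accept}}$ when $p_{\text{accept}}<1$; the case $p_{\text{accept}}=1$ is handled separately, and there both sides say ``intervene''. I would also divide by $p_{\text{need}}C_{\mathrm{FN}}$ when $p_{\text{need}}>0$. This gives
\[
\frac{p_{\text{accept}}}{1-p_{\text{accept}}}\ \ge\ \frac{C_{\mathrm{FA}}}{p_{\text{need}}\,C_{\mathrm{FN}}}.
\]
Here I expect a discrepancy with the statement. The algebra yields this reciprocal ratio, not the ratio $p_{\text{need}}C_{\mathrm{FN}}/C_{\mathrm{FA}}$ printed in \cref{eq:app-odds}. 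I would verify this by checking that the derived odds inequality is equivalent to \cref{eq:app-threshold} by cross-multiplication. The printed version fails this check; for example, it becomes more demanding as $C_{\mathrm{FN}}$ grows, which is the wrong direction. I would therefore present the corrected odds form.
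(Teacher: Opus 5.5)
Your argument is the paper's own: you compare $(1-p_{\text{accept}})C_{\mathrm{FA}}$ with $p_{\text{accept}}\,p_{\text{need}}\,C_{\mathrm{FN}}$, and the paper merely asserts that the silent cost ``scales with $p_{\text{need}}$,'' so making conditional independence an explicit hypothesis is the right repair. Your alternative reading, with $p_{\text{accept}}$ conditioned on need, does fix the joint term, but then the false-alarm risk is no longer $C_{\mathrm{FA}}(1-p_{\text{accept}})$ unless acceptance is independent of need anyway.

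The discrepancy you flag is genuine. The paper claims the inequality ``rearranges to'' \cref{eq:app-odds}, but it actually yields your $\frac{p_{\text{accept}}}{1-p_{\text{accept}}}\ge\frac{C_{\mathrm{FA}}}{p_{\text{need}}C_{\mathrm{FN}}}$. The printed form is instead equivalent to $p_{\text{accept}}\ge 1-\tau(p_{\text{need}})=\frac{p_{\text{need}}C_{\mathrm{FN}}}{C_{\mathrm{FA}}+p_{\text{need}}C_{\mathrm{FN}}}$, which is exactly the odds threshold the AUDBC appendix implements. Your corrected odds form is therefore the one consistent with \cref{eq:app-threshold}.
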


\begin{proof}
Condition on $X_t$ and write expected costs for $a\in\{\mathsf{int},\mathsf{sil}\}$.
Intervening pays only on a false alarm:
\begin{equation}
\mathbb{E}\!\left[\mathrm{Cost}\mid a=\mathsf{int},X_t\right]
=(1-p_{\text{accept}})\,C_{\mathrm{FA}}.
\label{eq:a-cost-int}
\end{equation}
Staying silent pays only on a missed help; the effective FN term scales with $p_{\text{need}}$:
\begin{equation}
\mathbb{E}\!\left[\mathrm{Cost}\mid a=\mathsf{sil},X_t\right]
= p_{\text{accept}}\,(p_{\text{need}}\,C_{\mathrm{FN}}).
\label{eq:a-cost-sil}
\end{equation}
Choose the action with smaller conditional expectation. Intervene iff
\[
(1-p_{\text{accept}})C_{\mathrm{FA}} \le p_{\text{accept}}\,p_{\text{need}}\,C_{\mathrm{FN}},
\]
which rearranges to \cref{eq:app-odds} and hence to \cref{eq:app-threshold}.
\end{proof}

\paragraph{Comparative statics.}
From $\tau(p_{\text{need}})=\dfrac{C_{\mathrm{FA}}}{C_{\mathrm{FA}}+p_{\text{need}}C_{\mathrm{FN}}}$,
\[
\frac{\partial \tau}{\partial p_{\text{need}}}=-\frac{C_{\mathrm{FA}}C_{\mathrm{FN}}}{(C_{\mathrm{FA}}+p_{\text{need}}C_{\mathrm{FN}})^2}<0,\quad
\frac{\partial \tau}{\partial C_{\mathrm{FA}}}>0,\quad
\frac{\partial \tau}{\partial C_{\mathrm{FN}}}<0.
\]
Thus higher burden $C_{\mathrm{FA}}$ tightens the gate, while higher risk $C_{\mathrm{FN}}$ or higher need $p_{\text{need}}$ relax it.

\section{AUDBC (Area Under $\Delta$-Burden Curve)}
For each evaluation event $e$, let $q_e\!\in\![0,1]$ denote the model's estimated need probability $p_{\text{need}}$, 
$p_e\!\in\![0,1]$ the estimated acceptance probability $q_{\text{accept}}$, and $n_e\!\in\!\mathbb{N}$ the number of candidate proposals. 
Given a fixed false-alarm cost $C_{\mathrm{FA}}>0$ and a false-negative cost $C_{\mathrm{FN}}>0$, the \emph{dynamic threshold} used by our gate is
\[
\tau(q_e;C_{\mathrm{FA}},C_{\mathrm{FN}})=\frac{C_{\mathrm{FN}}\cdot q_e}{C_{\mathrm{FA}}+C_{\mathrm{FN}}\cdot q_e}\,,
\]
which is the default \texttt{"odds"} implementation in our code.%
\footnote{The implementation flag \texttt{AUDBC\_TAU\_IMPL} defaults to \texttt{"odds"}; we also support a \texttt{"bayes"} variant, but all reported results use \texttt{"odds"}.} 
An intervention is triggered for event $e$ under cost $C_{\mathrm{FN}}$ iff
\[
\mathbb{I}_{C_{\mathrm{FN}}}(e)=\mathbf{1}\!\left[p_e\ge \tau(q_e;C_{\mathrm{FA}},C_{\mathrm{FN}})\;\wedge\; n_e>0\right].
\]
For a set $\mathcal{E}$ of $N$ evaluable events, the \emph{normalized burden} and \emph{expected benefit} at $C_{\mathrm{FN}}$ are
\[
B(C_{\mathrm{FN}})=\frac{1}{N}\sum_{e\in\mathcal{E}}\mathbb{I}_{C_{\mathrm{FN}}}(e),
\qquad
U(C_{\mathrm{FN}})=\frac{1}{N}\sum_{e\in\mathcal{E}}\mathbb{I}_{C_{\mathrm{FN}}}(e)\cdot p_e\,.
\]
We sweep $C_{\mathrm{FN}}$ over a predefined grid $\{c_k\}_{k=1}^K$ (environment variable \texttt{AUDBC\_CFN\_GRID}) with $C_{\mathrm{FA}}$ fixed (env \texttt{COST\_FA}), obtaining points 
$\bigl(B(c_k),\,U(c_k)\bigr)$.
After removing duplicates and sorting by $B$, we compute AUDBC by the trapezoidal rule on $[0,1]^2$:
\[
\mathrm{AUDBC}\;\approx\;\sum_{k=1}^{K-1}\bigl(B_{k+1}-B_k\bigr)\cdot\frac{U_k+U_{k+1}}{2}\;\;\in[0,1],
\]
where $B_k\!=\!B(c_k)$ and $U_k\!=\!U(c_k)$. 
Higher AUDBC indicates better benefit–burden trade-off under cost-sensitive gating.

\section{Evaluation Protocol and Judge Validation Details}
\label{app:judge_validation}

To ensure the reliability of our automated evaluation, we conducted a rigorous validation study comparing our LLM-as-Judge protocol against both human experts and alternative LLM panels.
\subsection{Judge Pools and Human Annotation}
We enlisted 7 volunteers from diverse backgrounds (including undergraduate and graduate students in programming, marketing, and business) to annotate a sampled subset of 229 events. The human "Gold" label was determined via majority vote.

For automated evaluation, we validated our primary judge (the majority vote used in the main paper) and further defined two distinct LLM pools to test for robustness:
\begin{itemize}[leftmargin=2em]
    \item \textbf{Pool A (Frontier):} GPT-4o, Claude 3.5-Sonnet, and Gemini 2.5-Flash.
    \item \textbf{Pool B (Open):} DeepSeek-R1, Kimi-K2, and Llama-3-70B.
\end{itemize}

\subsection{Agreement Analysis}
We measured the alignment between our automated judges and the human consensus using Agreement Rate, Cohen’s $\kappa$, and Matthews Correlation Coefficient (MCC). 

As shown in Table~\ref{tab:judge_agreement_full}, our primary ensemble judge achieves substantial agreement with human labels ($\kappa=0.71$). Furthermore, the high consistency across different pools (Pool A vs. Pool B $\kappa=0.76$) confirms that the evaluation signal is stable and not dependent on idiosyncratic biases of specific model families.

\begin{table}[h]
\centering
\small
\renewcommand{\arraystretch}{1.2}
\setlength{\tabcolsep}{8pt}
\begin{tabular}{lcccc}
\toprule
\textbf{Comparison Setting} & \textbf{Support} & \textbf{Agreement $\uparrow$} & \textbf{Cohen’s $\kappa$ $\uparrow$} & \textbf{MCC $\uparrow$} \\
\midrule
\multicolumn{5}{l}{\textit{Main Protocol Verification}} \\
Our Ensemble Judge vs. Human & 229 & 89.1\% & 0.71 & 0.71 \\
\midrule
\multicolumn{5}{l}{\textit{Cross-Model Consistency Checks}} \\
Pool A (Frontier) vs. Human & 229 & 90.8\% & 0.75 & 0.76 \\
Pool B (Open) vs. Human    & 229 & 89.1\% & 0.71 & 0.71 \\
Pool A vs. Pool B          & 229 & 91.2\% & 0.76 & 0.77 \\
\bottomrule
\end{tabular}
\caption{\textbf{Comprehensive Human-Model and Inter-Model Agreement Analysis.} 
"Our Ensemble Judge" refers to the majority vote protocol (GPT-4o, Claude-3.5-Sonnet, DeepSeek-R1) used for the main results.
The high agreement scores ($>0.70 \kappa$) across all settings confirm the reliability of the automated evaluation protocol.}
\label{tab:judge_agreement_full}
\end{table}

\subsection{Robustness of Performance Gains}
A critical concern in LLM-based evaluation is whether performance improvements are an artifact of the specific judge employed. To address this, we evaluated the relative performance of \textsc{PRISM} against the strongest baseline, DeepSeek-R1, across all three evaluation regimes.

Table~\ref{tab:judge_robustness} summarizes these results. While absolute F1 scores fluctuate—consistent with observations by \citet{ye2024justice} regarding varying judge severity—the \textbf{relative improvement ($\Delta$F1)} remains highly robust. \textsc{PRISM} consistently outperforms the baseline by approximately \textbf{+3.0 F1 points} across all evaluators, including human judges.

\begin{table}[h]
\centering
\small
\renewcommand{\arraystretch}{1.2}
\setlength{\tabcolsep}{12pt}
\begin{tabular}{llcc}
\toprule
\textbf{Judge Pool} & \textbf{Method} & \textbf{F1-Score $\uparrow$} & \textbf{$\Delta$F1 (Gain) $\uparrow$} \\
\midrule
\multirow{2}{*}{Pool A (Frontier)}  & DeepSeek-R1 & 82.92\% & -- \\
 & \textsc{PRISM} (Ours) & \textbf{86.05\%} & \textbf{+3.13\%} \\
\midrule
\multirow{2}{*}{Pool B (Open)}  & DeepSeek-R1 & 83.28\% & -- \\
 & \textsc{PRISM} (Ours) & \textbf{86.61\%} & \textbf{+3.33\%} \\
\midrule
\multirow{2}{*}{Human (Majority)} & DeepSeek-R1 & 82.05\% & -- \\
 & \textsc{PRISM} (Ours) & \textbf{84.85\%} & \textbf{+2.80\%} \\
\bottomrule
\end{tabular}
\caption{\textbf{Performance comparison across diverse judges.} \textsc{PRISM} maintains a consistent advantage over the DeepSeek-R1 baseline regardless of whether the evaluator is a proprietary model ensemble, an open-source ensemble, or human experts.}
\label{tab:judge_robustness}
\end{table}

\section{Statistical Significance Analysis}
\label{app:significance}

To verify that the performance gains of \textsc{PRISM} over its teacher model (\textsc{DeepSeek-R1}) are not due to random chance, we conducted statistical significance testing using bootstrap resampling ($N=10{,}000$ iterations). 
Table~\ref{tab:significance} reports the 95\% confidence intervals for the performance deltas and the corresponding $p$-values.

\begin{table}[h]
\centering
\small
\setlength{\tabcolsep}{5pt}
\renewcommand{\arraystretch}{1.2}
\begin{tabular}{lccccc}
\toprule
\textbf{Metric} & \textbf{DeepSeek-R1} & \textbf{PRISM (Ours)} & \textbf{Imp.} & \textbf{95\% CI} & \textbf{P-value} \\
\midrule
\textbf{Recall} $\uparrow$        & 98.12\% & 98.88\% & +0.76\% & $[-0.18\%, +1.75\%]$ & 0.115 \\
\textbf{Precision} $\uparrow$     & 72.35\% & 77.05\% & +4.70\% & $[+2.35\%, +7.05\%]$ & \textbf{$<0.001$} \\
\textbf{Accuracy} $\uparrow$      & 72.69\% & 76.39\% & +3.43\% & $[+1.10\%, +5.76\%]$ & 0.004 \\
\textbf{False-Alarm} $\downarrow$ & 27.64\% & 22.94\% & -4.70\% & $[-7.12\%, -2.28\%]$ & \textbf{$<0.001$} \\
\textbf{F1-Score} $\uparrow$      & 83.28\% & 86.61\% & +3.33\% & $[+1.52\%, +5.15\%]$ & 0.002 \\
\bottomrule
\end{tabular}
\caption{\textbf{Statistical significance of PRISM vs. DeepSeek-R1.} The results confirm that the improvements in Precision, Accuracy, False-Alarm rate, and F1 are statistically significant ($p<0.05$), while Recall remains statistically equivalent.}
\label{tab:significance}
\end{table}

The analysis confirms that \textsc{PRISM}'s primary contribution—suppressing false alarms to improve precision—is highly statistically significant ($p < 0.001$), demonstrating robust behavior distinct from the teacher model's base capabilities.

\section{Detailed Analysis of Inference Strategy}
\label{app:inference-analysis}

In this section, we provide the detailed numerical results supporting the inference strategy ablation discussed in \S\ref{sec:ablexp}. We analyze the sensitivity of the slow-margin hyperparameter $\delta$ and the robustness of the system across varying cost ratios ($C_{\mathrm{FA}}:C_{\mathrm{FN}}$).

\subsection{Impact of Slow Margin}
We investigated the impact of the confidence margin threshold $\delta$ on the efficiency-accuracy trade-off. The results are detailed in Table~\ref{tab:app-margin-sweep} and visualized in Figure~\ref{fig:margin-plots}.

\begin{table}[h]
\centering
\small
\setlength{\tabcolsep}{5pt}
\renewcommand{\arraystretch}{1.2}
\caption{\textbf{Sensitivity sweep for slow-margin $\delta$.} A margin of $\delta=0.1$ strikes the optimal balance, achieving the highest F1 and AUDBC. Increasing $\delta$ further to $0.15$ yields diminishing returns in accuracy while significantly increasing latency and token consumption.}
\label{tab:app-margin-sweep}
\begin{tabular}{lcccccccc}
\toprule
\textbf{Margin} $\delta$ & \textbf{Rec $\uparrow$} & \textbf{Pre $\uparrow$}  & \textbf{Acc $\uparrow$} & \textbf{FA $\downarrow$} & \textbf{F1 $\uparrow$} & \textbf{AUDBC $\uparrow$} & \textbf{Tokens} & \textbf{Lat} \\
\midrule
0 (Fast) & 100\% & 71.08\% & 71.07\% & 28.92\% & 83.09\% & 79.26\% & 510 & 176ms \\
0.05     & 100\% & 74.11\% & 74.10\% & 25.89\% & 85.12\% & 81.19\% & 518 & 178ms \\
\textbf{0.10}      & \textbf{100\%} & \textbf{78.81\%} & \textbf{79.33\%} & \textbf{21.19\%} & \textbf{88.15\%} & \textbf{82.72\%} & 541 & 196ms \\
0.15     & 100\% & 74.57\% & 75.20\% & 25.43\% & 85.43\% & 80.58\% & 563 & 230ms \\
\bottomrule
\end{tabular}
\end{table}

\begin{figure}[t]
    \centering
    \begin{subfigure}[b]{0.32\textwidth}
        \centering
        \includegraphics[width=\linewidth]{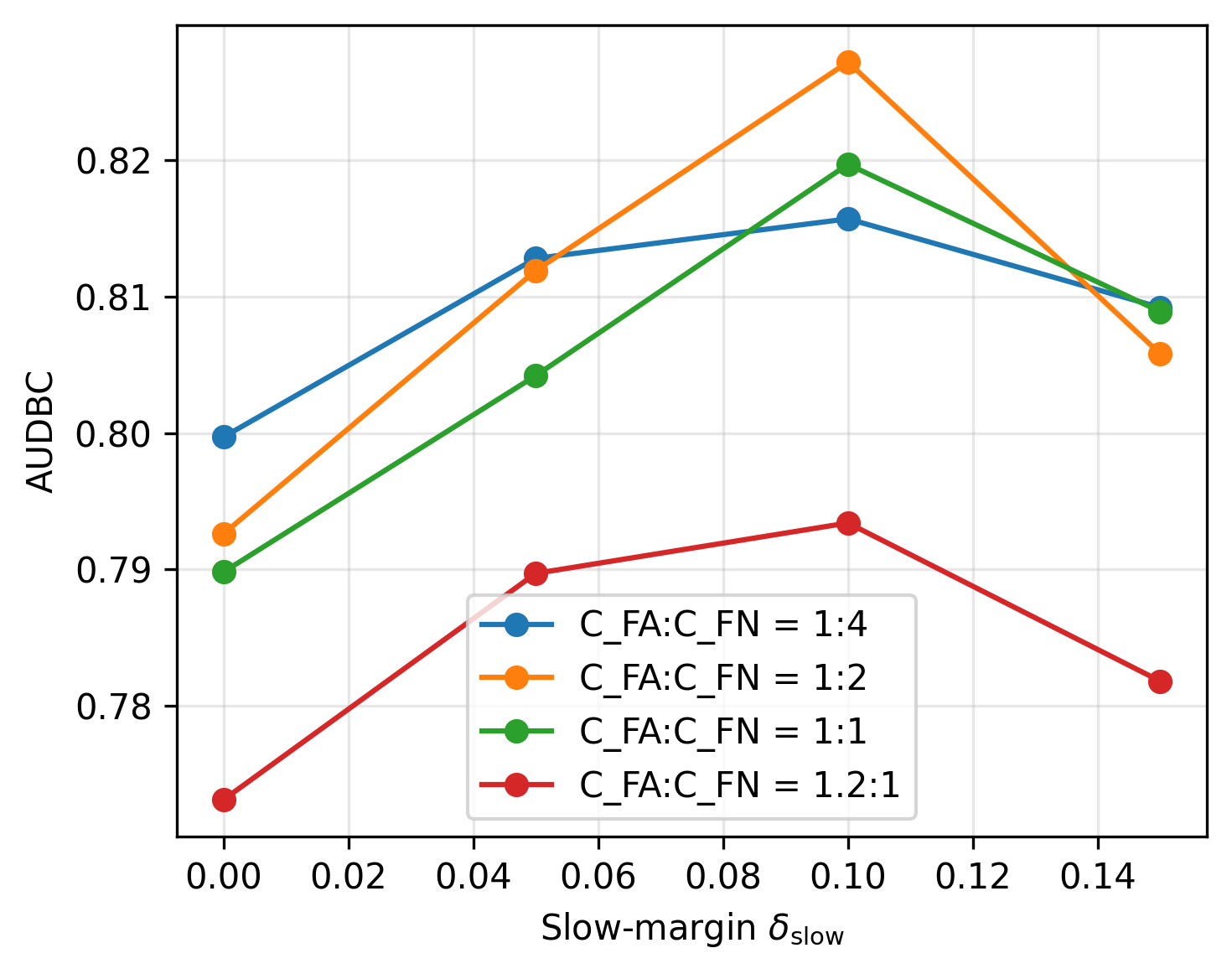}
        \subcaption{AUDBC vs. $\delta$}
        \label{fig:audbc-vs-delta}
    \end{subfigure}
    \hfill
    \begin{subfigure}[b]{0.32\textwidth}
        \centering
        \includegraphics[width=\linewidth]{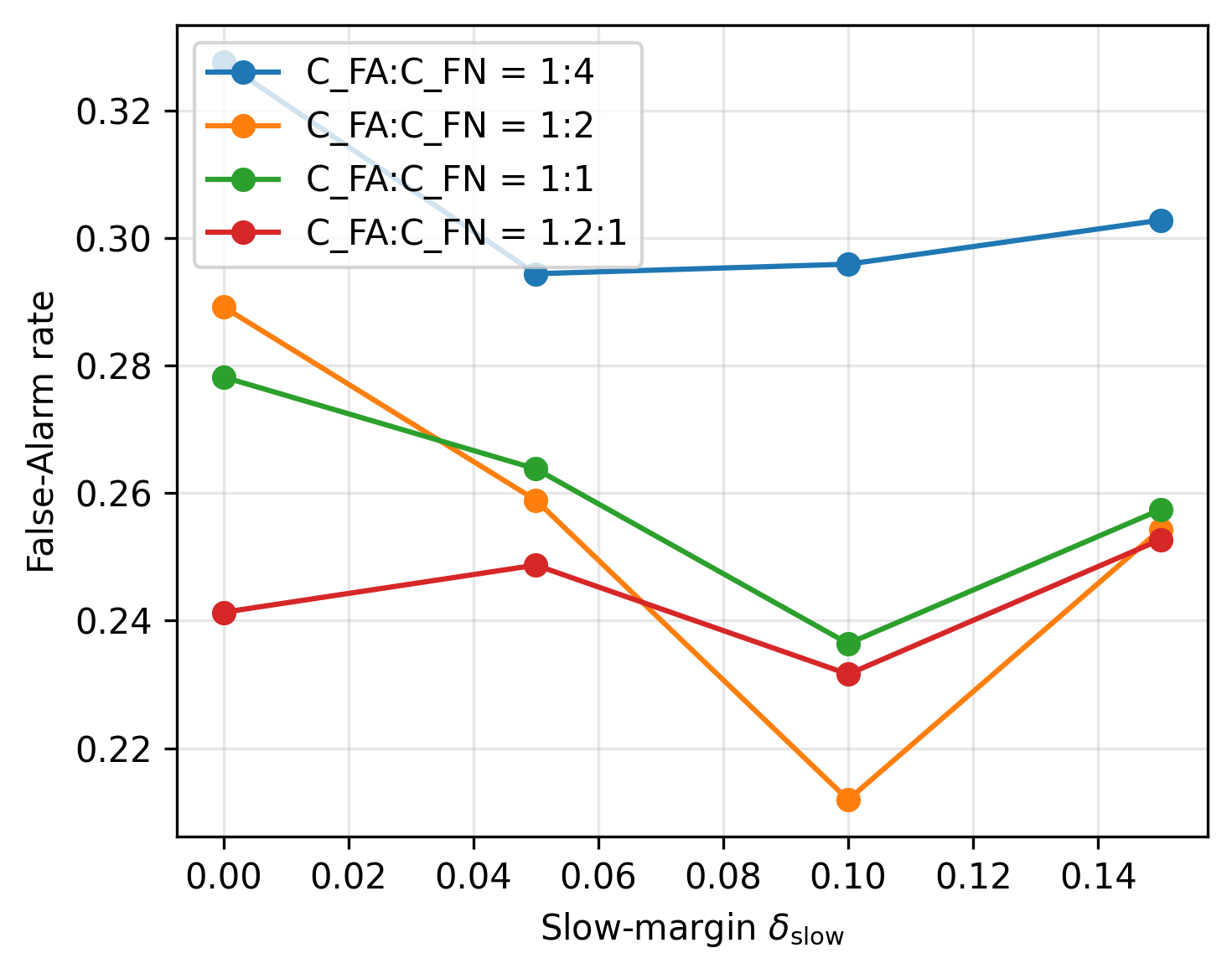}
        \subcaption{FA Rate vs. $\delta$}
        \label{fig:fa-vs-delta}
    \end{subfigure}
    \hfill
    \begin{subfigure}[b]{0.32\textwidth}
        \centering
        \includegraphics[width=\linewidth]{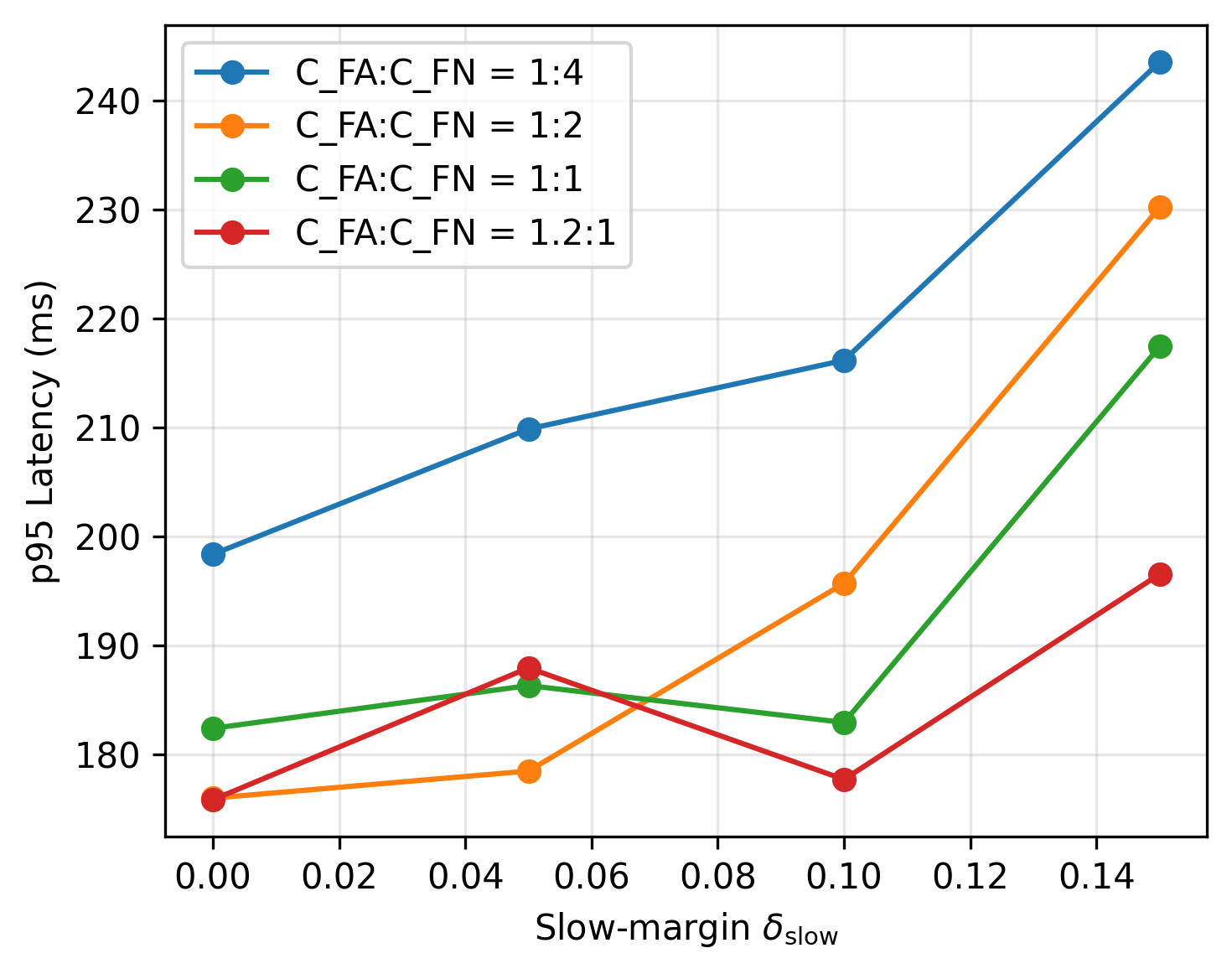}
        \subcaption{Latency vs. $\delta$}
        \label{fig:latency-vs-delta}
    \end{subfigure}

    \caption{\textbf{Impact of Slow Margin.} Increasing the margin initially boosts performance by correcting hard examples, but latency grows linearly. $\delta=0.1$ is the ``sweet spot''.}
    \label{fig:margin-plots}
\end{figure}

\subsection{Robustness Across Cost Ratios}
To verify that our method is not over-fitted to a specific cost setting, we conducted a 2D grid sweep over the cost ratio $C_{\mathrm{FA}}:C_{\mathrm{FN}}$ (ranging from eager 1:4 to conservative 1.2:1) and the slow margin $\delta$. Table~\ref{tab:app-grid-sweep} presents the full results.

\begin{table*}[t]
\centering
\small
\setlength{\tabcolsep}{5pt}
\renewcommand{\arraystretch}{1.1}
\caption{\textbf{Robustness Grid Sweep.} Performance metrics across varying user cost profiles ($C_{\mathrm{FA}}:C_{\mathrm{FN}}$) and slow margins. \textsc{PRISM} consistently benefits from the margin-based gating ($\delta > 0$) across diverse cost settings, maintaining high F1 and AUDBC scores.}
\label{tab:app-grid-sweep}
\begin{tabular}{cc|cccccccc}
\toprule
\textbf{Cost Ratio} & \textbf{Margin} & \textbf{Rec $\uparrow$} & \textbf{Pre $\uparrow$} & \textbf{Acc $\uparrow$} & \textbf{FA $\downarrow$} & \textbf{F1 $\uparrow$} & \textbf{AUDBC} & \textbf{Tokens} & \textbf{Lat} \\
($C_{\mathrm{FA}}:C_{\mathrm{FN}}$) & ($\delta$) & & & & & & & &  \\
\midrule
\multirow{4}{*}{$1:4$}
& 0    & 100\% & 67.24\% & 78.93\% & 32.76\% & 83.64\% & 79.97\% & 532 & 198ms \\
& 0.05 & 100\% & 70.56\% & 81.24\% & 29.44\% & 82.13\% & 81.28\% & 547 & 210ms \\
& 0.1  & 100\% & 70.41\% & 80.22\% & 29.59\% & 83.63\% & 81.57\% & 562 & 216ms \\
& 0.15 & 100\% & 69.72\% & 81.57\% & 30.28\% & 82.96\% & 80.92\% & 590 & 244ms \\
\midrule
\multirow{4}{*}{$1:2$}
& 0    & 100\% & 71.08\% & 71.07\% & 28.92\% & 83.09\% & 79.26\% & 510 & 176ms \\
& 0.05 & 100\% & 74.11\% & 74.10\% & 25.89\% & 85.12\% & 81.19\% & 518 & 178ms \\
& 0.1  & 100\% & 78.81\% & 79.33\% & 21.19\% & 88.15\% & 82.72\% & 541 & 196ms \\
& 0.15 & 100\% & 74.57\% & 75.20\% & 25.43\% & 85.43\% & 80.58\% & 563 & 230ms \\
\midrule
\multirow{4}{*}{$1:1$}
& 0    & 98.89\% & 72.18\% & 80.34\% & 27.82\% & 83.34\% & 78.98\% & 517 & 182ms \\
& 0.05 & 97.96\% & 73.62\% & 81.50\% & 26.38\% & 84.07\% & 80.42\% & 532 & 186ms \\
& 0.1  & 98.88\% & 72.36\% & 80.35\% & 23.64\% & 83.96\% & 81.97\% & 534 & 183ms \\
& 0.15 & 98.85\% & 74.26\% & 78.64\% & 25.74\% & 83.42\% & 80.89\% & 568 & 217ms \\
\midrule
\multirow{4}{*}{$1.2:1$}
& 0    & 86.18\% & 75.87\% & 79.90\% & 24.13\% & 82.39\% & 77.31\% & 504 & 176ms \\
& 0.05 & 90.34\% & 75.13\% & 78.44\% & 24.87\% & 81.28\% & 78.97\% & 515 & 188ms \\
& 0.1  & 89.62\% & 76.83\% & 79.59\% & 23.16\% & 83.74\% & 79.34\% & 528 & 178ms \\
& 0.15 & 86.45\% & 74.73\% & 78.52\% & 25.27\% & 83.45\% & 78.18\% & 572 & 197ms \\
\bottomrule
\end{tabular}
\end{table*}

\section{Robustness and Calibration Analysis}
\label{app:robustness}

In this section, we provide detailed supporting evidence for the robustness of \textsc{PRISM} across three dimensions: cost sensitivity, domain generalization, and calibration stability.

\subsection{Sensitivity to Cost Ratios ($C_{\mathrm{FA}}:C_{\mathrm{FN}}$)}
Different deployment scenarios require different trade-offs between helpfulness (Recall) and intrusiveness (False Alarm). We swept the cost ratio $C_{\mathrm{FA}}:C_{\mathrm{FN}}$ from $1:4$ to $1.2:1$ to evaluate how the system adapts to varying penalties for interruptions versus missed opportunities.

As shown in Table~\ref{tab:cost_sensitivity}, \textsc{PRISM} demonstrates strong adaptability without requiring retraining:
\begin{enumerate}
    \item \textbf{Performance Trade-off:} As the penalty for false alarms increases (moving from $1:4$ to $1.2:1$), the model effectively shifts its operating point. Precision improves ($70.41\% \to 76.83\%$) and the False-Alarm rate drops significantly ($29.59\% \to 23.17\%$), while F1 scores remain stable ($>\!83\%$) across all settings.
    \item \textbf{Efficiency Gains:} Stricter constraints ($1.2:1$) also lead to greater efficiency. The "Slow Rate" (percentage of calls routed to heavy reasoning) decreases from $15.45\%$ to $9.96\%$, resulting in reduced token consumption and lower latency ($216\text{ms} \to 177\text{ms}$).
\end{enumerate}

\begin{table}[t]
\centering
\small
\setlength{\tabcolsep}{6pt}
\renewcommand{\arraystretch}{1.12}
\begin{tabular}{lccccccccc}
\toprule
$\mathbf{C_{\mathrm{FA}} : C_{\mathrm{FN}}}$ 
& \textbf{Rec $\uparrow$} 
& \textbf{Pre $\uparrow$} 
& \textbf{Acc $\uparrow$} 
& \textbf{FA $\downarrow$} 
& \textbf{F1 $\uparrow$} 
& \textbf{AUDBC $\uparrow$} 
& \textbf{Slow} 
& \textbf{Tokens} 
& \textbf{Lat} \\
\midrule
$1:4$   & 100\%   & 70.41\% & 80.22\% & 29.59\% & 83.63\% & 81.57\% & 15.45\% & 562.43 & 216.16 \\
$1:2$   & 100\%   & 73.07\% & 81.36\% & 26.93\% & 84.44\% & 82.22\% & 13.47\% & 538.73 & 191.82 \\
$1:1$   & 98.88\% & 72.36\% & 80.35\% & 23.64\% & 83.96\% & 81.97\% & 11.15\% & 533.79 & 182.90 \\
$1.2:1$ & 89.62\% & 76.83\% & 79.59\% & 23.17\% & 83.74\% & 79.34\% &  9.96\% & 528.34 & 177.67 \\
\bottomrule
\end{tabular}
\caption{\textbf{Impact of Cost Ratios.} \textsc{PRISM} adapts to stricter penalties for false alarms ($1.2:1$) by naturally increasing precision and reducing chatter, maintaining high F1 scores across the spectrum.}
\label{tab:cost_sensitivity}
\end{table}

\subsection{Domain Generalization}
\label{app:domain_gen}

To evaluate whether \textsc{PRISM} learns generalized proactive semantics rather than memorizing domain-specific patterns (e.g., coding syntax), we tested the model on the held-out "Writing" domain (Out-of-Domain), while the training data was dominated by "Coding" tasks (In-Domain).

Table~\ref{tab:domain_transfer} compares the performance of the raw RDC-SFT model against the full \textsc{PRISM} pipeline (with post-hoc Temperature Scaling, denoted as "After-T").

\begin{table}[h]
\centering
\small
\setlength{\tabcolsep}{4.5pt}
\renewcommand{\arraystretch}{1.15}
\begin{tabular}{llccccc}
\toprule
\textbf{Domain} & \textbf{Stage} & \textbf{Recall $\uparrow$} & \textbf{Precision $\uparrow$} & \textbf{False-Alarm $\downarrow$} & \textbf{F1-Score $\uparrow$} & \textbf{AUDBC $\uparrow$} \\
\midrule
In-Domain & Raw & 98.80\% & 74.77\% & 25.23\% & 85.12\% & 86.43\% \\
(Coding) & \textbf{After-T} & 98.80\% & 74.77\% & 25.23\% & 85.12\% & 86.43\% \\
\midrule
Out-of-Domain & Raw & 95.83\% & 71.39\% & 28.61\% & 81.37\% & 82.15\% \\
(Writing) & \textbf{After-T} & 98.37\% & 73.37\% & 26.63\% & 82.63\% & 84.66\% \\
\bottomrule
\end{tabular}
\caption{\textbf{Domain Generalization \& Calibration Effect.} While the raw model shows a performance dip on the unseen Writing domain, post-hoc calibration ("After-T") effectively recovers Recall ($+2.54\%$) and AUDBC ($+2.51\%$), narrowing the gap between in-domain and out-of-domain performance.}
\label{tab:domain_transfer}
\end{table}

\paragraph{Analysis.}
The results demonstrate strong generalization.
Even without domain-specific training, the student model maintains high F1 ($>82\%$) on writing tasks, suggesting the learned representations for $p_{\text{need}}$ and $p_{\text{accept}}$ are semantically robust.
The raw model tends to be slightly under-confident or miscalibrated on out-of-distribution data (lower Recall of 95.83\%). Temperature scaling plays a crucial role here: it corrects the probability distribution, recovering Recall to \textbf{98.37\%} and significantly boosting AUDBC to \textbf{84.66\%}. This confirms that our proposed calibration mechanism acts as a stabilizer against domain shifts.
\subsection{Calibration Stability and Quality}
\label{app:calibration_details}

We provide a two-fold analysis of our post-hoc calibration strategy: first, quantifying the improvement in probability estimation (ECE and Brier score), and second, stress-testing the decision policy against hyperparameter drift.

\paragraph{Calibration Quality.}
Table~\ref{tab:ece_metrics} compares the calibration metrics before and after applying temperature scaling. We optimized separate temperatures for $p_{\text{need}}$ ($T_{\text{need}}=0.5$) and $p_{\text{accept}}$ ($T_{\text{accept}}=0.7$) on the validation set.
The results show a significant reduction in Expected Calibration Error (ECE) for both signals ($21.53\% \to 12.16\%$ for \emph{need}, $13.19\% \to 7.42\%$ for \emph{accept}). This confirms that the performance gains in Ablation D stem from more reliable probability estimates, ensuring the model is not merely "lucky" at a specific threshold but is genuinely functionally calibrated.

\begin{figure}[h]
\centering
\includegraphics[width=0.95\linewidth]{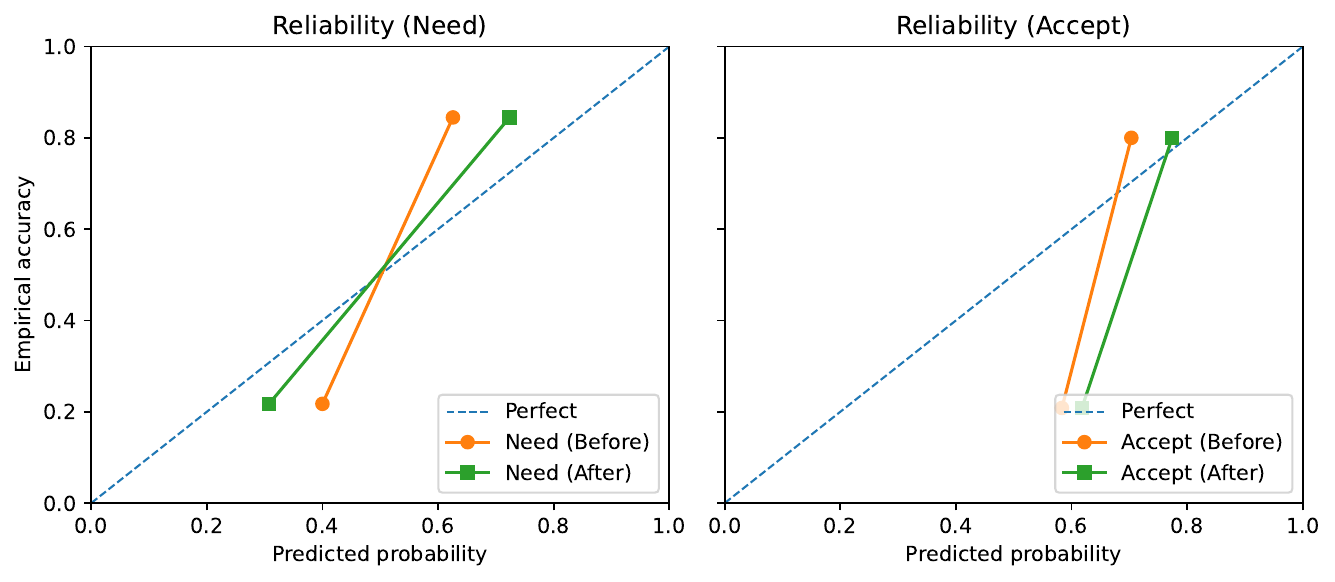}
\caption{\textbf{Reliability Diagrams.} Visualization of calibration performance for Need (left) and Accept (right) probabilities. The green curves (After Scaling) align significantly closer to the perfect diagonal than the orange curves (Before Scaling), visually confirming the ECE improvements.}
\label{fig:calibration_diagram}
\end{figure}

\begin{table}[h]
\centering
\small
\setlength{\tabcolsep}{5pt}
\renewcommand{\arraystretch}{1.15}
\begin{tabular}{lcccccccc}
\toprule
& \multicolumn{2}{c}{\textbf{Temperature}} & \multicolumn{2}{c}{\textbf{ECE} $\downarrow$} & \multicolumn{2}{c}{\textbf{Brier} $\downarrow$} & \multicolumn{2}{c}{\textbf{Task Metrics}} \\
\cmidrule(lr){2-3} \cmidrule(lr){4-5} \cmidrule(lr){6-7} \cmidrule(lr){8-9}
\textbf{Setting} & $T_{\text{need}}$ & $T_{\text{acc}}$ & Need & Acc & Need & Acc & F1-Score $\uparrow$ & False Alarm $\downarrow$ \\
\midrule
Before Scaling & 1.0 & 1.0 & 21.53\% & 13.19\% & 17.95\% & 18.58\% & 84.85\% & 25.99\% \\
After Scaling & 0.5 & 0.7 & 12.16\% & 7.42\% & 14.71\% & 18.20\% & 85.70\% & 25.12\% \\
\bottomrule
\end{tabular}
\caption{\textbf{Calibration Quality Metrics.} Post-hoc scaling significantly reduces Expected Calibration Error (ECE) and Brier scores for both gating signals, creating a more reliable foundation for decision-making.}
\label{tab:ece_metrics}
\end{table}

\paragraph{Sensitivity to Hyperparameter Drift.}
To ensure our method is not brittle, we evaluated performance under suboptimal calibration parameters. We perturbed the temperature $T$ (scaling logits) and the decision bias $\epsilon$ (shifting thresholds) around the optimal grid.
Table~\ref{tab:calibration_drift} demonstrates strong robustness:
\begin{enumerate}
    \item \textbf{Moderate Drift (Robust):} Within a reasonable range ($T \in [0.75, 1.25]$, $\epsilon \in [-0.15, +0.15]$), F1 scores remain consistently high ($>\!82.8\%$). The "Flip Rate" (percentage of decisions changed relative to baseline) remains moderate ($<15\%$), indicating the core policy is stable.
    \item \textbf{Extreme Scenarios (Failure Modes):} Only under "super optimistic" ($T=0.5, \epsilon=+0.3$) or "super pessimistic" ($T=1.5, \epsilon=-0.3$) settings does performance degrade significantly. This confirms that while calibration helps, the underlying RDC-SFT model provides a stable enough signal that precise tuning is beneficial but not strictly mandatory for viability.
\end{enumerate}

\begin{table}[h]
\centering
\small
\setlength{\tabcolsep}{3.5pt}
\renewcommand{\arraystretch}{1.12}
\begin{tabular}{lcccccccc}
\toprule
\textbf{Scenario} & \textbf{$T$} & \textbf{$\epsilon$} & \textbf{Recall $\uparrow$} & \textbf{Pre $\uparrow$} & \textbf{FA $\downarrow$} & \textbf{F1 $\uparrow$} & \textbf{AUDBC $\uparrow$} & \textbf{Flip} \\
\midrule
Baseline & 1.0 & 0 & 98.80\% & 74.77\% & 25.23\% & 85.12\% & 86.43\% & 0\% \\
\midrule
\multicolumn{9}{l}{\textit{Moderate Drift (Robustness Region)}} \\
Moderate & 0.75 & 0 & 98.95\% & 74.41\% & 25.89\% & 84.78\% & 84.47\% & 6.25\% \\
Moderate & 1.25 & 0 & 97.89\% & 75.58\% & 24.42\% & 85.56\% & 87.02\% & 4.46\% \\
Moderate & 1.0 & +0.15 & 99.98\% & 71.87\% & 28.13\% & 83.34\% & 81.57\% & 12.08\% \\
Moderate & 1.0 & -0.15 & 96.67\% & 76.22\% & 23.78\% & 85.30\% & 87.34\% & 14.83\% \\
Moderate & 0.75 & +0.15 & 99.14\% & 73.69\% & 26.31\% & 82.84\% & 80.58\% & 14.29\% \\
Moderate & 0.75 & -0.15 & 97.63\% & 75.21\% & 24.79\% & 85.67\% & 85.83\% & 13.52\% \\
Moderate & 1.25 & +0.15 & 98.73\% & 73.33\% & 26.67\% & 83.21\% & 83.24\% & 15.17\% \\
Moderate & 1.25 & -0.15 & 95.58\% & 77.67\% & 22.33\% & 82.97\% & 80.32\% & 17.85\% \\
\midrule
\multicolumn{9}{l}{\textit{Extreme Scenarios (Failure Modes)}} \\
Super Optimistic & 0.5 & +0.30 & 100\% & 53.18\% & 46.82\% & 68.67\% & 42.57\% & 68.34\% \\
Super Pessimistic & 1.5 & -0.30 & 37.49\% & 88.74\% & 11.26\% & 38.89\% & 45.64\% & 72.95\% \\
\bottomrule
\end{tabular}
\caption{\textbf{Sensitivity Analysis under Parameter Drift.} The model maintains high F1-Score ($>82\%$) across a wide grid of calibration parameters. Flip denotes the percentage of decisions changed relative to the baseline decision set.}
\label{tab:calibration_drift}
\end{table}

\section{Training configuration}\label{app:train-config}

\noindent
\begin{minipage}{\linewidth}
\begin{lstlisting}[language=YAML]
# --- Base model ---
model_name_or_path: Qwen3-8B
trust_remote_code: true
template: qwen
cutoff_len: 4096
dataset: proactive_agent_rdc_distilled
max_samples: 100000
train_on_prompt: false
per_device_train_batch_size: 1
gradient_accumulation_steps: 4
learning_rate: 1.0e-5
num_train_epochs: 3
lr_scheduler_type: cosine
warmup_ratio: 0.1
pure_bf16: true
\end{lstlisting}
\end{minipage}

\end{document}

%% file: math_commands.tex
\usepackage{amsmath,amsfonts,bm}

\def\eqref#1{equation~\ref{#1}}

\def\1{\bm{1}}

\DeclareMathAlphabet{\mathsfit}{\encodingdefault}{\sfdefault}{m}{sl}
\SetMathAlphabet{\mathsfit}{bold}{\encodingdefault}{\sfdefault}{bx}{n}

